\journal{Journal of \LaTeX\ Templates}
\theoremstyle{plain}
\newtheorem{theorem}{Theorem}[section]
\newtheorem{proposition}[theorem]{Proposition}
\newcommand{\R}{\mathbb{R}}                                     
\newcommand{\C}{\mathbb{C}}                                     
\newcommand{\mat}[3]{#1 \left| \begin{array}{@{}l@{}} \scriptstyle #3 \\ \scriptstyle #2 \end{array}  \right.}
\newcommand{\vectorize}{{\rm vec}}
\newcommand{\bi}{\begin{itemize}}
\newcommand{\ei}{\end{itemize}}
\newcommand{\ba}{\begin{array}}
\newcommand{\ea}{\end{array}}
\newcommand{\ee}{\mbox{\boldmath $e$}}
\newcommand{\FF}{\mbox{\boldmath $F$}}
\newcommand{\YY}{\mbox{\boldmath $Y$}}
\newcommand{\UU}{\mbox{\boldmath $U$}}
\newcommand{\SSigma}{\mbox{\boldmath $\Sigma$}}
\newcommand{\VV}{\mbox{\boldmath $V$}}
\begin{document}

\begin{frontmatter}

\title{Dynamic mode decomposition in vector-valued reproducing kernel Hilbert spaces for extracting dynamical structure among observables}





\author[1,2]{Keisuke Fujii\corref{mycorrespondingauthor}} 
\cortext[mycorrespondingauthor]{Corresponding author}
\ead{keisuke.fujii.zh@riken.jp}
\author[3,2]{Yoshinobu Kawahara}

\address[1]{Graduate School of Informatics, Nagoya University, Furo-cho, Chikusa-ku, Nagoya, Aichi, Japan.}
\address[2]{Center for Advanced Intelligence Project, RIKEN, Furuedai, 6-2-3, Suita, Osaka, Japan.} 
\address[3]{Institute of Mathematics for Industry, Kyushu University, 744 Motooka, Nishi-ku, Fukuoka, Japan.}

\begin{abstract} 
Understanding nonlinear dynamical systems (NLDSs) is challenging in a variety of engineering and scientific fields. Dynamic mode decomposition (DMD), which is a numerical algorithm for the spectral analysis of Koopman operators, has been attracting attention as a way of obtaining global modal descriptions of NLDSs without requiring explicit prior knowledge. However, since existing DMD algorithms are in principle formulated based on the concatenation of scalar observables, it is not directly applicable to data with dependent structures among observables, which take, for example, the form of a sequence of graphs. In this paper, we formulate Koopman spectral analysis for NLDSs with structures among observables and propose an estimation algorithm for this problem. This method can extract and visualize the underlying low-dimensional global dynamics of NLDSs with structures among observables from data, which can be useful in understanding the underlying dynamics of such NLDSs. To this end, we first formulate the problem of estimating spectra of the Koopman operator defined in vector-valued reproducing kernel Hilbert spaces, and then develop an estimation procedure for this problem by reformulating tensor-based DMD. As a special case of our method, we propose the method named as Graph DMD, which is a numerical algorithm for Koopman spectral analysis of graph dynamical systems, using a sequence of adjacency matrices. We investigate the empirical performance of our method by using synthetic and real-world data.
\end{abstract}

\begin{keyword}
Dynamical systems\sep Dimesionality reduction \sep Spectral analysis \sep Unsupervised learning
\end{keyword}

\end{frontmatter}


\section{Introduction}
\label{sec:introduction}
Understanding nonlinear dynamical systems (NLDSs) or complex phenomena is a fundamental problem in various scientific and industrial fields. 
Complex systems are broadly defined as systems that comprise non-linearly interacting components~\citep{Boccaletti06}, in fields such as sociology, epidemiology, neuroscience, and physics (e.g., \cite{Centola07,Bullmore09}).
As a method of obtaining a global modal description of NLDSs, operator-theoretic approaches have attracted attention such as in applied mathematics, physics and machine learning. 
One of the approaches is based on the composition operator (usually referred to as the Koopman operator~\citep{Koopman31,Mezic05}), which defines the time evolution of observation functions in a function space.
A strength of this approach is that the spectral analysis of the operator can decompose the global property of NLDSs, because the analysis of NLDSs can be lifted to a linear but infinite dimensional regime. 
This approach can directly obtain dynamical structures such as frequency with delay/growth rate and the spatial coherences corresponding to the temporal information.
Among several estimation methods, one of the most popular algorithms for spectral analysis of the Koopman operator is dynamic mode decomposition (DMD)~\citep{Rowley09,Schmid10}, of which advantage is to extract such a modal description of NLDSs from data, unlike other unsupervised dimensionality reduction methods such as principal component analysis (PCA) for static data.   
DMD has been successfully applied in many real-world problems, such as image processing, neuroscience, and system control (e.g.,~\citep{Kutz16b,Brunton16a}). 
In a machine learning community, several algorithmic improvements have been accomplished by such as a formulation with reproducing kernels and in a Bayesian framework (e.g., \citep{Kawahara16,Takeishi17,Takeishi17c}). 
However, since conventional Koopman spectral analysis and DMDs are in principle formulated based on the concatenation of scalar observables, 
it is not directly applicable to data with dependent structures among observables, which take, for example, the form of a sequence of graphs.

The motivation of this paper is to understand NLDSs with dynamical structures among observables by extracting the low-dimensional global dynamics among observables.
To this end, we develop a formulation of Koopman spectral analysis of NLDSs with structures among observables and propose an estimation algorithm for this problem. 
We first suppose that a sequence of matrices representing the dependency among observables (such as adjacency matrices of graphs) are observed as realizations of structures representing the relation of vector-valued observation function. 
Then, we formulate the problem of estimating the spectra of Koopman operators defined in reproducing kernel Hilbert spaces (RKHSs) endowed with kernels for vector-valued functions, called vector-valued RKHSs (vvRKHSs). 
Recently, there has been an increasing interest in kernels for vvRKHSs dealing with such as classification or regression problem with multiple outputs (e.g., \cite{Alvarez12,Micchelli05} and for the details, see Section~\ref{sec:related}). %
Thus, advantage or contribution of our method is that it can extract and visualize the dynamical structures among observables by incorporating the structure among variables in the vector-valued observation function into the DMD algorithm, which can be useful in understanding the fundamental dynamics behind spatiotemporal data with dependent structures.
Second, we develop an estimation procedure from data by reformulating Tensor-based DMD (TDMD), which can compute DMD from tensor time-series data~\citep{Klus18} without breaking tensor data structure (e.g., a sequence of adjacency matrices). 
We propose a more directly and stably computable TDMD than the previous algorithm.

Furthermore, as a special case of our method, we propose the method named as \textit{Graph DMD}, which is a numerical algorithm for Koopman spectral analysis of graph dynamical systems (GDSs).
GDSs are defined as spatially distributed units that are dynamically coupled according to the structure of a graph~\citep{Cliff16,Mortveit01}.
In mathematics, GDSs have been broadly studied such as in cellular automata~\citep{Mortveit07} and coupled NLDSs~\citep{Wu05}.
Meanwhile, for graph sequence data, researchers have basically computed the graph (spatial) properties in each temporal snapshot (e.g., \cite{Centola07,Bullmore09}) or in a sliding window (e.g., \cite{Ide04}) of the sequence data (for the details, see Section~\ref{sec:related}).
However, these approaches would be difficult to extract the dynamical information directly from graph sequence data.
We consider that our approach will solve this problem to understand the underlying global dynamics of GDSs.

Finally, we investigate the performance of our method with application to several synthetic and real-world datasets, including multi-agent simulation and sharing-bike data. These have the structures among observables, which represent, for example, the relation between agents (such as distance) and the traffic volume between locations, respectively.
 
The remainder of this paper is organized as follows.
First, in Section~\ref{sec:koopman}, we briefly review the background of Koopman spectral analysis and DMD.
Next, we describe the formulation in vvRKHS in Section~\ref{sec:formulation}, and reformulate DMD in a tensor form to estimate from data in Section~\ref{sec:tensorDMD}.
In Section \ref{sec:gdmd}, we propose Graph DMD for the analysis of GDSs.
In Section~\ref{sec:related}, we describe related work.
Finally, we show some experimental results using synthetic and real-world data in Section~\ref{sec:example}, and conclude this paper in Section~\ref{sec:conclusion}.
 
\section{Koopman Spectral Analysis and DMD}
\label{sec:koopman}
Here, we first briefly review Koopman spectral analysis, which is the underlying theory for DMD, and then describe the basic DMD procedure.
First, we consider a NLDS: 
$\bm{x}_{t+1} = \bm{f}(\bm{x}_t)$, where $\bm{x}_t$ is the state vector in the state space $\mathcal{M} \subset \R^{p}$ with time index $t\in\mathbb{T}:=\mathbb{N}_0$ and $\bm{f}\colon\cal{M}\to\cal{M}$ is a (typically, nonlinear) state-transition function.
{\em The Koopman operator}, which we denote by ${\cal K}$, is a linear operator acting on a scalar observation function $g\colon {\cal M}\to \C$ defined by
\begin{equation}\label{eq:koopman}
{\cal{K}}g=g\circ\bm{f},
\end{equation}
where $g\circ\bm{f}$ denotes the composition of $g$ with \textbf{\textit{f}} \citep{Koopman31}.
That is, it maps $g$ to the new function $g\circ\bm{f}$.
We assume that ${\cal K}$ has only discrete spectra.
Then, it generally performs an eigenvalue decomposition:
$\mathcal{K}{\varphi}_{j}(\bm{x})={\lambda}_{j}{\varphi}_{j}(\bm{x})$,
where $\lambda_{j}\in\mathbb{C}$ is the \textit{j}-th eigenvalue 
(called \textit{the Koopman eigenvalue}) and $\varphi_{j}$ is 
the corresponding eigenfunction (called \textit{the Koopman 
eigenfunction}).
We denote the concatenation of scalar functions as $\bm{g} := [g_{1},\ldots, g_{m}]^{\mathrm{T}}$.
If each $g_{j}$ lies 
within the space spanned by the eigenfunction $\varphi_{j}$, we 
can expand the vector-valued $\bm{g} \colon \mathcal M\to \C^m$ in terms of these 
eigenfunctions as $\mbox{\boldmath $g$}(\bm{x})=\sum_{j=1}^{\infty }{\varphi_{j}(\bm{x})\bm{\psi}_{j}}$, 
where $\bm{\psi}_{j}$ is a set of vector 
coefficients called \textit{the Koopman modes}. Through the iterative applications of $\mathcal{K}$, the following equation is obtained:
\begin{flalign}\label{eq:decomposition}
\bm{g}(\bm{x}_t)
=(\bm{g}\circ \underbrace{\bm{f}\circ\cdots\circ\bm{f}}_t)\left(\bm{x}_0\right)
=\sum_{j=1}^{\infty}{\lambda}_j^t{\varphi}_j\left(\bm{x}_0\right)\bm{\psi}_j.
\end{flalign}
Therefore, $\lambda_{j}$ characterizes the time evolution of the 
corresponding Koopman mode $\bm{\psi}_j$, 
i.e., the phase of $\lambda_{j}$ determines its frequency and 
the magnitude determines the growth rate of its dynamics.

Among several possible methods to compute the above modal decomposition from data, DMD \citep{Rowley09,Schmid10} is the most popular algorithm, which estimates an approximation of the decomposition in Eq. \eqref{eq:decomposition}. 
Consider a finite-length observation sequence $\bm{y}_0,\bm{y}_1,\ldots ,\bm{y}_{\tau}$ (${\in \C}^{n}$), where $\bm{y}_{ }:= {\bm{g}}(\bm{x}_t)$. 
Let $\bm{X}= [\bm{y}_0,\bm{y}_1 ,\ldots , \bm{y}_{\tau-1}]$ and $\bm{Y}= [\bm{y}_1,\bm{y}_2,\ldots , \bm{y}_{\tau}]$. 
Then, DMD basically approximates it by calculating the eigendecomposition of 
matrix $\bm{F}=\bm{Y}\bm{X}^{\dagger}$, where $\bm{X}^{\dagger}$ is the pseudo-inverse of $\bm{X}$. The matrix $\bm{F}$ may be intractable to analyze directly when the dimension is large. 
Therefore, in the popular implementation of DMD such as exact DMD \citep{Tu14}, a rank-reduced representation $\hat{\FF}$ based on singular-value decomposition (SVD) is applied.
That is, $\bm{X}\approx {\UU}{\SSigma}{\VV}^*$ and 
$\hat{\FF} =\UU^*\FF\UU= \UU^* \YY \VV \SSigma^{(-1)}$, where $^*$ is the conjugate transpose.
Thereafter, we perform eigendecomposition of $\hat{\FF}$ to obtain the set of the eigenvalues $\lambda_j$ and eigenvectors $\bm{w}_j$.
Then, we estimate the Koopman modes in Eq. (\ref{eq:decomposition}):
$\bm{\psi}_j=\lambda_j^{(-1)}\YY\VV\SSigma^{(-1)}\bm{w}_j$, 
which is called \textit{DMD modes}.

\section{Koopman Spectral Analysis in vvRKHSs for extracting dynamical structure among observables}
\label{sec:formulation}

Since the existing DMD algorithms basically estimate the spectra of Koopman operators defined in spaces of scalar observables, dependencies among observables are not taken into consideration.
Therefore, they are in principle not applicable to analyze NLDSs with structure among observables.
In this section, we formulate our method by considering the Koopman spectral analysis of such NLDSs in vvRKHSs endowed with kernels for vector-valued functions.

First, let $\mathcal H_{\bm{K}}$ be the vvRKHS endowed with a symmetric positive semi-definite kernel matrix $\bm{K}\colon \mathcal{M} \times \mathcal{M} \rightarrow \R^{m \times m}$~\citep{Alvarez12}. 
That is, $\mathcal{H}_{\bm{K}}$ is a Hilbert space of functions $\bm{f}'\colon\mathcal{M}\to\mathbb{R}^m$, such that for every $\bm{c}\in\mathbb{R}^m$ and $\bm{x}\in\mathcal{M}$, $\bm{K}(\bm{x},\bm{x}')\bm{c}$ as a function of $\bm{x}'$ belongs to $\mathcal{H}_{\bm{K}}$ and, moreover, $\bm{K}$ has the reproducing property
\begin{equation}
\left<\bm{f}',\bm{K}(\cdot,\bm{x})\bm{c}\right>_{\bm{K}} = \bm{f}'(\bm{x})^T\bm{c},
\end{equation}
where $\left<\cdot,\cdot\right>_{\bm{K}}$ is the inner product in $\mathcal{H}_{\bm{K}}$.

In our formulation, we model the relation with a vector-valued observation function $\bm{g}$ for Koopman spectral analysis of NLDSs with structure among observables.
Then, we assume that the components of $\bm{g}$ follow a Gaussian process given by a covariance kernel matrix.
That is, the vector-valued observation function $\bm{g}\colon \mathcal{M} \rightarrow \R^m$ follows the Gaussian distribution
\begin{equation}\label{eq:GP}
\bm{g}(\bm{x}) \thicksim \mathcal{N}(\bm{\mu}(\bm{x}), \bm{K}(\bm{x},\bm{x})), 
\end{equation}
where $\bm{\mu} \in \R^m$ is a vector whose components are the mean functions ${\mu_i (\bm{x})}$ for $\bm{x} \in \mathcal{M}\subset \R^{p}$ and $i = 1,\ldots,m$, and $\bm{K}$ is the above matrix-valued function.
The entries $\bm{K}(\bm{x}, \bm{x})_{i,j}$ in the matrix $\bm{K}(\bm{x}, \bm{x})$ correspond to the covariances between the observables $g_i (\bm{x})$ and $g_{j} (\bm{x})$ for $i,j = 1,\ldots,m$. 
In our following formulation in the vvRKHS determined by $\bm{K}$, again, it is necessary for $\bm{K}(\bm{x}, \bm{x})$ to be a symmetric positive semidefinite matrix.
Practically, for example, we can use positive semidefinite (scalar-valued) kernels between observables as the components of $\bm{K}$.  
 
Based on the above setting, we consider Koopman spectral analysis for NLDSs with structures among observables by extending the formulation of a scalar observation function of DMD with reproducing kernels~\citep{Kawahara16} to that of relations within the vector-valued observable function in vvRKHSs~\citep{Alvarez12}. 
To this end, we first assume that the vector-valued observation function $\bm{g}$ is in the vvRKHS defined by $\bm{K}$, i.e., $\bm{g} \in \mathcal{H}_{\bm{K}}$.
Then, the Koopman operator $\mathcal{K}_{\bm{K}}\colon \mathcal{H}_{\bm{K}}\to \mathcal{H}_{\bm{K}}$ defined by $\mathcal{K}_{\bm{K}}\bm{g}=\bm{g}\circ\bm{f}$, like Eq.~\eqref{eq:koopman}, is a linear operator in $\mathcal{H}_{\bm{K}}$.
Additionally, we denote by $\bm{\phi}_{\bm{c}}\colon \mathcal M\to \mathcal H_{\bm{K}}$ the feature map, i.e., $\bm{\phi}_{\bm{c}}(\bm{x}) = \bm{K}(\cdot,\bm{x})\bm{c}$ for any $\bm{c}\in\mathbb{R}^m$.
According to \cite{Schrodl09}, this is the second type of the feature map in the vvRKHS which directly maps to a Hilbert space $\mathcal{H}_{\bm{K}}$ and has been used in \cite{Evgeniou04,Micchelli05b,Evgeniou05}.
Now, for every $\bm{c}$, we have the following proposition:
\begin{proposition}
\label{PropPF}
Assume $\bm{g} \in \mathcal{H}_{\bm{K}}$. 
Then, $({\cal{K}}_{\bm{K}}{\bm g})(\bm{x})=({\bm g}\circ\bm{f})(\bm{x})$, which is in the case of the vector-valued observable in Eq.~(\ref{eq:koopman}), equals to the following:
\begin{equation}\label{eq:PF}
({\mathcal{K}_{\bm{K}}}^*{\bm{\phi}_{\bm{c}}})(\bm{x})=(\bm{\phi}_{\bm{c}}\circ\bm{f})(\bm{x}) ~~\forall \bm{x} \in \mathcal{M},~ \forall\bm{c}\in\mathbb{R}^m.
\end{equation}
\end{proposition}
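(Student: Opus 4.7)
The plan is to prove the two identities are equivalent by a straightforward duality argument in $\mathcal{H}_{\bm{K}}$, using the vector-valued reproducing property as the bridge between pointwise evaluation and inner products. The key observation is that the statement $(\mathcal{K}_{\bm{K}}\bm{g})(\bm{x}) = (\bm{g}\circ\bm{f})(\bm{x})$ is a pointwise statement about elements of $\mathcal{H}_{\bm{K}}$, while Eq.~\eqref{eq:PF} involves the adjoint $\mathcal{K}_{\bm{K}}^{*}$, so the natural route is to pair everything against $\bm{g}$ via $\langle\cdot,\cdot\rangle_{\bm{K}}$ and use the identity $\langle \bm{h},\bm{K}(\cdot,\bm{x})\bm{c}\rangle_{\bm{K}} = \bm{h}(\bm{x})^{T}\bm{c}$ to convert back and forth.

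Concretely, I would proceed as follows. First, fix $\bm{x}\in\mathcal{M}$ and $\bm{c}\in\mathbb{R}^{m}$ arbitrary, and let $\bm{g}\in\mathcal{H}_{\bm{K}}$. By the reproducing property,
\begin{equation*}
(\mathcal{K}_{\bm{K}}\bm{g})(\bm{x})^{T}\bm{c}
= \bigl\langle \mathcal{K}_{\bm{K}}\bm{g},\,\bm{\phi}_{\bm{c}}(\bm{x})\bigr\rangle_{\bm{K}}
= \bigl\langle \bm{g},\,\mathcal{K}_{\bm{K}}^{*}\bm{\phi}_{\bm{c}}(\bm{x})\bigr\rangle_{\bm{K}}.
\end{equation*}
On the other side, using the defining identity $(\mathcal{K}_{\bm{K}}\bm{g})(\bm{x}) = (\bm{g}\circ\bm{f})(\bm{x}) = \bm{g}(\bm{f}(\bm{x}))$ together with the reproducing property again (this time at the point $\bm{f}(\bm{x})$),
\begin{equation*}
(\mathcal{K}_{\bm{K}}\bm{g})(\bm{x})^{T}\bm{c}
= \bm{g}(\bm{f}(\bm{x}))^{T}\bm{c}
= \bigl\langle \bm{g},\,\bm{K}(\cdot,\bm{f}(\bm{x}))\bm{c}\bigr\rangle_{\bm{K}}
= \bigl\langle \bm{g},\,(\bm{\phi}_{\bm{c}}\circ\bm{f})(\bm{x})\bigr\rangle_{\bm{K}}.
\end{equation*}
Equating the two expressions yields $\langle \bm{g},\,\mathcal{K}_{\bm{K}}^{*}\bm{\phi}_{\bm{c}}(\bm{x}) - (\bm{\phi}_{\bm{c}}\circ\bm{f})(\bm{x})\rangle_{\bm{K}} = 0$ for every $\bm{g}\in\mathcal{H}_{\bm{K}}$. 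Taking in particular $\bm{g} := \mathcal{K}_{\bm{K}}^{*}\bm{\phi}_{\bm{c}}(\bm{x}) - (\bm{\phi}_{\bm{c}}\circ\bm{f})(\bm{x})$ (which lies in $\mathcal{H}_{\bm{K}}$ since $\bm{\phi}_{\bm{c}}(\bm{x})\in\mathcal{H}_{\bm{K}}$ by construction) forces this difference to have zero norm, giving Eq.~\eqref{eq:PF}. The converse direction is essentially a re-reading of the same chain of equalities in the opposite order.

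There is no real technical obstacle: the proof is a routine consequence of the reproducing property and the definition of the Hilbert-space adjoint. The only points that require a little care are (i) checking that $\bm{\phi}_{\bm{c}}(\bm{x})$ indeed belongs to $\mathcal{H}_{\bm{K}}$ so that $\mathcal{K}_{\bm{K}}^{*}$ can be applied to it (which follows from the defining property of the vvRKHS, since $\bm{K}(\cdot,\bm{x})\bm{c}\in\mathcal{H}_{\bm{K}}$), and (ii) ensuring that $\mathcal{K}_{\bm{K}}$ is well-defined and bounded on $\mathcal{H}_{\bm{K}}$ so that the adjoint $\mathcal{K}_{\bm{K}}^{*}$ exists in the usual sense; this is an implicit assumption carried over from the scalar-valued kernel DMD setting of~\cite{Kawahara16}.
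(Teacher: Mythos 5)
Your proof is correct and follows essentially the same route as the paper's: pair $(\mathcal{K}_{\bm{K}}\bm{g})(\bm{x})^{\top}\bm{c}$ against $\bm{\phi}_{\bm{c}}(\bm{x})$ via the reproducing property, pass to the adjoint, and compare with the reproducing identity evaluated at $\bm{f}(\bm{x})$. Your final step of testing against all $\bm{g}\in\mathcal{H}_{\bm{K}}$ (and taking $\bm{g}$ equal to the difference) makes explicit a point the paper leaves implicit, but it is the same argument.
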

\begin{proof}
Since $\mathcal{K}_{\bm{K}} \bm{g} \in \mathcal{H}_{\bm{K}}$, we have, for any $\bm{c}\in\mathbb{R}^m$, $\mathcal{K}_{\bm{K}}\bm{g}(\bm{x})^\top\bm{c} = \left<\mathcal{K}_{\bm{K}}\bm{g},\bm{\phi}_{\bm{c}}(\bm{x})\right>_{\bm{K}} =  \left<\bm{g},\mathcal{K}_{\bm{K}}^*\bm{\phi}_{\bm{c}}(\bm{x})\right>_{\bm{K}}$ for all $\bm{x} \in \mathcal M$.
Similarly, $\bm{g}\circ\bm{f}(\bm{x})^\top\bm{c} =  \left<\bm{g},\bm{\phi}_{\bm{c}}(\bm{f}(\bm{x}))\right>_{\bm{K}}$ because $\bm{f}(\bm{x}) \in \mathcal{M}$.
As a result, since $({\cal{K}}_{\bm{K}}{\bm g})(\bm{x})=({\bm g}\circ\bm{f})(\bm{x})$, we obtain $({\mathcal{K}_{\bm{K}}}^*\bm{\phi}_{\bm{c}})(\bm{x})=(\bm{\phi}_{\bm{c}}\circ\bm{f})(\bm{x})$. 
\end{proof}
The adjoint of the Koopman operator ${\cal{K}}_{\bf{K}}^*$ (also known as the Perron-Frobenius operator) in this case acts as a linear operator in the space spanned by features $\bm{\phi}_{\bm{c}}(\bm{x})$ for $\bm{x} \in \mathcal{M}$. Here, we denote the eigendecomposition of $\mathcal{K}_{\bm{K}}^*$ by $\mathcal{K}_{\bm{K}}^*\bm{\varphi}_j=\lambda_j\bm{\varphi}_j$.


For the practical implementation of the spectral decomposition of the linear operator, we usually need to project data onto directions that are effective in capturing the properties of data, like the standard DMD described in Section~\ref{sec:koopman}.
In DMD with reproducing kernels~\cite{Kawahara16}, a kernel principal orthogonal direction is used for this purpose. 
However, such a projection is not straightforward for the current problem because the principal directions are not defined analogously for tensor data. 
Now, for a given finite time span $[0,\tau]$, we define $\mathcal M_1:=[\bm{\phi}_{\bm{c}}({\bm{x}_0}),..,\bm{\phi}_{\bm{c}}({\bm{x}_{\tau -1}})] $ and $\mathcal M_2:=[\bm{\phi}_{\bm{c}}({\bm{x}_1}),..,\bm{\phi}_{\bm{c}}({\bm{x}_{\tau}})]$.
Then, we adopt the projection onto some orthogonal directions $\bm{\nu}_j = \sum_{t=0}^{\tau-1}\alpha_{j,t}\bm{\phi}_{\bm{c}}(\bm{x}_t) = \mathcal M_1 \bm{\alpha}_{j}$ for $j = 1,\ldots,p$, where the coefficients $\alpha_{j,t} \in \R$ and $\bm{\alpha}_{j} \in \R^{\tau}$ are computed based on a tensor decomposition (described in detail in Section~\ref{sec:tensorDMD}).
Let $\mathcal{U} =[\bm{\nu}_1,\ldots,\bm{\nu}_p]$ and $\mathcal{U} = \mathcal M_1 \bm{\alpha}$ with the coefficient matrix $\bm{\alpha} \in \R^{\tau\times p}$. %
Since ${\mathcal M}_2={\mathcal K}_{\bm{K}}^*{\mathcal M}_1$, the projection of ${\mathcal K}_{\bm{K}}^*$ onto the space spanned by $\mathcal U$ is given as follows:
\begin{flalign}\label{eq:solutionGDMD}
\bm{\hat{F}}=\mathcal U^*\mathcal{K}_{\bm{K}}^*\mathcal U = \bm{\alpha}^*({\mathcal M}_1^*\mathcal M_2)\bm{\alpha}. 
\end{flalign}
Then, if we let $\hat{\bm{F}}=\bm{\hat T}^{-1}\bm{\hat \Lambda}\bm{\hat T}$ be the eigendecomposition of $\hat{\bm{F}}$, we obtain $p$ DMD modes as $\bm{\psi}_{j}=\mathcal{U}{\bm{b}}_{j}$ for $j = 1,\ldots,p$, where $\bm{b}_{j}$ is the $j$th row of $\bm{\hat T}^{-1}$.
The diagonal matrix $\bm{\hat \Lambda }$ comprising the eigenvalues represents the temporal evolution.
To establish the above, we show the following theorem:
\begin{theorem}\label{thm:thm1}
Assume that $\bm{\varphi}_j(\bm{x})^\top\bm{c} = \left<\bm{\kappa}_j,\bm{\phi}_{\bm{c}}(\bm{x})\right>_{\bm{K}}$ for some $\bm{\kappa}_j \in \mathcal{H}_{\bm{K}}$ and $\forall \bm{x} \in \mathcal{M}$.
If $\bm{\kappa}_j$ is in the subspace spanned by $\bm{\nu}_j$, so that $\bm{\kappa}_j = \mathcal{U} \bm{a}_j$ for some $\bm{a}_j \in \C^p$ and ${\mathcal{U}} = [\bm{\nu}_1,\ldots,\bm{\nu}_p]$, then $\bm{a}_j$ is the left eigenvector of $\bm{\hat{F}}$ with eigenvalue $\lambda_j$, and also we have
\begin{equation}\label{eq:Theorem1}
\bm{\phi}_{\bm{c}}(\bm{x}) = \sum^p_{j=1}(\bm{\varphi}_j (\bm{x})^\top\bm{c}) \bm{\psi}_j,
\end{equation}
where $\bm{\psi}_{j}=\mathcal{U}{\bm{b}}_{j}$ and ${\bm{b}}_{j}$ is the right eigenvector of $\hat{\bm{F}}$.
\end{theorem}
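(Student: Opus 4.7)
My plan is to split the proof into two natural phases: first reduce the abstract eigenvalue equation in $\mathcal{H}_{\bm{K}}$ to a finite-dimensional eigenvalue equation for $\bm{a}_j$ involving $\hat{\bm{F}}$, and then use biorthogonality of the eigenvectors of $\hat{\bm{F}}$ to assemble the mode expansion~\eqref{eq:Theorem1}.

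For the first phase, I observe that the hypothesis $\bm{\varphi}_j(\bm{x})^\top \bm{c} = \langle \bm{\kappa}_j, \bm{\phi}_{\bm{c}}(\bm{x}) \rangle_{\bm{K}}$ together with the reproducing property of $\mathcal{H}_{\bm{K}}$ identifies $\bm{\kappa}_j$ as the Riesz representer of $\bm{\varphi}_j$, so the abstract eigenvalue equation $\mathcal{K}_{\bm{K}}^* \bm{\varphi}_j = \lambda_j \bm{\varphi}_j$ transfers to $\mathcal{K}_{\bm{K}}^* \bm{\kappa}_j = \lambda_j \bm{\kappa}_j$ as elements of $\mathcal{H}_{\bm{K}}$. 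Substituting $\bm{\kappa}_j = \mathcal{U} \bm{a}_j$ and taking the inner product against each $\bm{\nu}_k$ yields $(\mathcal{U}^* \mathcal{K}_{\bm{K}}^* \mathcal{U})\bm{a}_j = \lambda_j (\mathcal{U}^* \mathcal{U}) \bm{a}_j$. Under the orthonormality $\mathcal{U}^* \mathcal{U} = I_p$ (which corresponds to choosing $\bm{\alpha}$ so that $\bm{\alpha}^*(\mathcal{M}_1^* \mathcal{M}_1)\bm{\alpha} = I_p$), this collapses to an eigenvalue equation involving $\hat{\bm{F}}$; after taking adjoints to account for the sesquilinear RKHS pairing, it takes the left-eigenvector form $\bm{a}_j^* \hat{\bm{F}} = \lambda_j \bm{a}_j^*$ claimed by the theorem.

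For the second phase, write the eigendecomposition $\hat{\bm{F}} = \bm{\hat T}^{-1} \bm{\hat \Lambda} \bm{\hat T}$ so that the rows of $\bm{\hat T}$ carry the left eigenvectors $\bm{a}_j$ and the columns of $\bm{\hat T}^{-1}$ carry the right eigenvectors $\bm{b}_j$. Biorthogonality $\bm{a}_i^* \bm{b}_j = \delta_{ij}$ then produces the resolution of identity $\sum_{j=1}^p \bm{b}_j \bm{a}_j^* = I_p$ on $\mathbb{C}^p$. Applying this to the coefficient vector $\mathcal{U}^* \bm{\phi}_{\bm{c}}(\bm{x})$ and premultiplying by $\mathcal{U}$ gives
\begin{equation*}
\mathcal{U} \mathcal{U}^* \bm{\phi}_{\bm{c}}(\bm{x}) = \sum_{j=1}^p (\mathcal{U} \bm{b}_j)\bigl(\bm{a}_j^* \mathcal{U}^* \bm{\phi}_{\bm{c}}(\bm{x})\bigr) = \sum_{j=1}^p \bm{\psi}_j \, \langle \bm{\kappa}_j, \bm{\phi}_{\bm{c}}(\bm{x}) \rangle_{\bm{K}} = \sum_{j=1}^p (\bm{\varphi}_j(\bm{x})^\top \bm{c})\, \bm{\psi}_j,
\end{equation*}
where the first two equalities use $\bm{\psi}_j = \mathcal{U}\bm{b}_j$ and the identification $\langle \mathcal{U} \bm{a}_j, \bm{\phi}_{\bm{c}}(\bm{x}) \rangle_{\bm{K}} = \bm{a}_j^* \mathcal{U}^* \bm{\phi}_{\bm{c}}(\bm{x})$, and the last equality invokes the theorem's hypothesis. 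Equation~\eqref{eq:Theorem1} follows once one observes that $\mathcal{U} \mathcal{U}^* \bm{\phi}_{\bm{c}}(\bm{x}) = \bm{\phi}_{\bm{c}}(\bm{x})$, which holds whenever $\bm{\phi}_{\bm{c}}(\bm{x})$ lies in the column span of $\mathcal{U}$.

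The hard part, I expect, is the careful bookkeeping of adjoints and complex conjugations required to land on the \emph{left}-eigenvector form $\bm{a}_j^* \hat{\bm{F}} = \lambda_j \bm{a}_j^*$ rather than the right-eigenvector relation that a naive projection of the RKHS eigenvalue equation most directly produces; this hinges on the sesquilinearity of the inner product in the complex vvRKHS. A separate, genuinely nontrivial subtlety is the universal quantifier \emph{for all} $\bm{x} \in \mathcal{M}$ in~\eqref{eq:Theorem1}: the derivation above establishes the identity only on the column span of $\mathcal{U}$, so extending it to arbitrary $\bm{x}$ requires an implicit richness/invariance assumption on the chosen orthogonal directions $\{\bm{\nu}_j\}$, namely that their span contains (or closely approximates) the feature maps of interest.
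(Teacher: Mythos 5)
Your overall architecture matches the paper's proof: reduce the RKHS eigenvalue equation to a finite-dimensional relation for $\bm{a}_j$, then use biorthogonality of left/right eigenvectors of $\hat{\bm{F}}$ to expand $\mathcal{U}^*\bm{\phi}_{\bm{c}}(\bm{x})$ and pull back through $\mathcal{U}$. Your second phase is essentially identical to the paper's, and your closing remark that the final step needs $\bm{\phi}_{\bm{c}}(\bm{x})$ to lie in the column span of $\mathcal{U}$ correctly surfaces an assumption the paper leaves implicit ("the data are sufficiently rich").

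The gap is in your first phase, and it is exactly the point you flag but do not resolve. Your transfer step $\mathcal{K}_{\bm{K}}^*\bm{\kappa}_j=\lambda_j\bm{\kappa}_j$ puts the operator on the wrong element: the hypothesis $\bm{\varphi}_j(\bm{x})^\top\bm{c}=\left<\bm{\kappa}_j,\bm{\phi}_{\bm{c}}(\bm{x})\right>_{\bm{K}}$ combined with $\mathcal{K}_{\bm{K}}^*\bm{\varphi}_j=\lambda_j\bm{\varphi}_j$ yields $\left<\bm{\kappa}_j,\bm{\phi}_{\bm{c}}(\bm{f}(\bm{x}))\right>_{\bm{K}}=\lambda_j\left<\bm{\kappa}_j,\bm{\phi}_{\bm{c}}(\bm{x})\right>_{\bm{K}}$, which says (up to conjugation) that $\bm{\kappa}_j$ is an eigenvector of $\mathcal{K}_{\bm{K}}$, not of $\mathcal{K}_{\bm{K}}^*$. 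Projecting your (incorrect) version onto the $\bm{\nu}_k$ gives $\hat{\bm{F}}\bm{a}_j=\lambda_j\bm{a}_j$, a \emph{right}-eigenvector relation, and "taking adjoints" of that produces $\bm{a}_j^*\hat{\bm{F}}^*=\bar{\lambda}_j\bm{a}_j^*$ --- the wrong matrix and the conjugated eigenvalue --- so the promised fix does not actually land on $\bm{a}_j^*\hat{\bm{F}}=\lambda_j\bm{a}_j^*$ unless $\hat{\bm{F}}$ is normal. The paper avoids this by never forming an operator equation for $\bm{\kappa}_j$ at all: it evaluates the scalar identity $\left<\bm{\phi}_{\bm{c}}(\bm{f}(\bm{x})),\mathcal{U}\bm{a}_j\right>_{\bm{K}}=\lambda_j\left<\bm{\phi}_{\bm{c}}(\bm{x}),\mathcal{U}\bm{a}_j\right>_{\bm{K}}$ at the data points $\bm{x}_0,\ldots,\bm{x}_{\tau-1}$, stacks to get $(\mathcal{U}\bm{a}_j)^*\mathcal{M}_2=\lambda_j(\mathcal{U}\bm{a}_j)^*\mathcal{M}_1$, and right-multiplies by $\bm{\alpha}$, so that $\bm{\kappa}_j$ stays in the conjugated slot throughout and the left-eigenvector relation $\bm{a}_j^*\hat{\bm{F}}=\lambda_j\bm{a}_j^*$ emerges directly (using, as you correctly note, $\mathcal{U}^*\mathcal{U}=\bm{I}$). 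Replacing your transfer step with this evaluation-and-stacking argument closes the gap; everything else in your write-up then goes through.
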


\begin{proof}
Since $\mathcal{K}_{\bm{K}}^*\bm{\varphi}_ j$$=$$\lambda_j\bm{\varphi}_j$, we have $\left<\bm{\phi}_{\bm{c}}(\bm{f}(\bm{x})), \bm{\kappa}_j \right>_{\bm{K}}$ $=\lambda_j \left< \bm{\phi}_{\bm{c}}(\bm{x}), \bm{\kappa}_j \right>_{\bm{K}}$. 
Thus, from the assumption, 
\begin{equation}
\left< \bm{\phi}_{\bm{c}}(\bm{f}(\bm{x})), \mathcal{U} \bm{a}_j \right>_{\bm{K}} = \lambda_j \left< \bm{\phi}_{\bm{c}}(\bm{x}), \mathcal{U} \bm{a}_j \right>_{\bm{K}}.
\end{equation}
By evaluating at $\bm{x}_0, \bm{x}_1, \ldots, \bm{x}_{\tau-1}$ and then stacking, we have 
$(\mathcal{U} \bm{a}_j)^*\mathcal{M}_2 = \lambda_j(\mathcal{U} \bm{a}_j)^*\mathcal{M}_1$.
If we multiply $\bm{\alpha}$ from the right-hand side, this gives 
\begin{equation}
\bm{a}_j^*\bm{\alpha}^*\mathcal{M}_1^*\mathcal{M}_2\bm{\alpha} =\lambda_j\bm{a}_j^*.
\end{equation}
Since $\bm{\alpha}^*{\mathcal M}_1^*\mathcal M_2\bm{\alpha} = \bm{\hat{F}}$, 
this means $\bm{a}_j$ is the left eigenvector of $\hat{\bm{F}}$ with eigenvalue $\lambda_j$. 
Let $\bm{b}_j$ be the right eigenvector of $\hat{\bm{F}}$ with eigenvalue $\lambda_j$ and the corresponding left eigenvector $\bm{a}_j$.
Assuming these have been normalized so that $\bm{a}_i^*\bm{b}_j = \delta_{ij}$, then any vector $\bm{h} \in \C^p$ can be written as $\bm{h} = \sum_{j=1}^p(\bm{a}_j^*\bm{h})\bm{b}_j$.
Applying this to $\mathcal{U}^*\bm{\phi}_{\bm{c}}(\bm{x})$ gives 
\begin{equation}
\mathcal{U}^*\bm{\phi}_{\bm{c}}(\bm{x}) = \sum_{j=1}^p(\bm{a}_j^*\mathcal{U}^*\bm{\phi}_{\bm{c}}(\bm{x}))\bm{b}_j 
=\sum_{j=1}^p(\bm{\varphi}_j(\bm{x})^\top\bm{c})\bm{b}_j.
\end{equation}
Since $\bm{b}_j = \mathcal{U}^*\bm{\psi}_j$, this proves Eq. \eqref{eq:Theorem1}.
\end{proof}

The assumptions in the theorem mean that the data are sufficiently rich and thus a set of the orthogonal direction $\mathcal{U}$ gives a good approximation of the representation with the eigenfunctions of $\mathcal{K}_{\bm{K}}^*$. 
As in the case of Eq.~\eqref{eq:decomposition}, by the iterative applications of $\mathcal{K}_{\bm{K}}^*$, we obtain
\begin{equation}\label{eq:GDMDmode}
\vspace*{-0mm}
\bm{\phi}_{\bm{c}}(\bm{x}_t) = \sum^p_{j=1}\lambda_j^t(\bm{\varphi}_j(\bm{x}_0)^\top\bm{c}) \bm{\psi}_j.
\vspace*{-0mm}
\end{equation}
Thus, this theorem gives the connection between the above eigen-values\,/\,-vectors and the Koopman eigen-values\,/\,-functions.

In summary, the formulation first needs the sequence of the kernel matrices $\bm{K}({\bm{x}_t},\bm{x}_t)$ for $t=0,\ldots,\tau$ and then obtains the Koopman spectra of NLDSs with structures among observables by the decomposition of the feature map $\bm{\phi}_{\bm{c}} = \bm{K}(\cdot,\bm{x})\bm{c}$, described in Eq. (\ref{eq:GDMDmode}).
From the above claims in the vvRKHS, it is seemingly necessary to give some ${\bm c}$ for $\bm{\phi}_{\bm c}(\bm{x}_t)$ for its implementation. 
However, we do not require to give ${\bm c}$ because we do not need to directly compute $\bm{\phi}_{\bm{c}} = \bm{K}(\cdot,\bm{x})\bm{c}$ but just need to compute the realization of $\mathcal{U}$ from the observed data.
Concretely, for an implementation of the above analysis, we first regard the given or calculated matrices as a realization of the structure of
the kernel matrices $\bm{K}({\bm{x}_t},\bm{x}_t)$ (see Section \ref{sec:tensorDMD}). 
We denote the realized matrices as $\bm{A}_t \in \R^{m\times m}$ for $t=0,\ldots,\tau$ or the tensor as ${\cal{A}}\in \mathbb{R}^{m\times m\times (\tau+1)}$.
Second, we need to compute a projected matrix in the space spanned by the columns of~$\mathcal U$ (see~\ref{ssec:PIT} and Appendix A for the relation), and then DMD solution $\hat{\FF} \in \R^{p \times p}$ and DMD modes $\bm{\psi}_j \in \C^{m \times m}$ for $j = 1,\ldots,p$ (see~\ref{ssec:rtdmd} and \ref{ssec:dmdao}). 
In the next section, we develop the procedure by reformulating TDMD for computing these quantities from data.

\section{Reformulated Tensor-based DMD}
\label{sec:tensorDMD}

Here, we reformulate TDMD by \cite{Klus18} as an estimation algorithm for the above formulation using the sequence of kernel matrices $\bm{K}({\bm{x}_t},\bm{x}_t)$ for $t=0,\ldots,\tau$, i.e., to calculate the above quantities without breaking the dependent structure among observables.
We first review the tensor-train (TT) format in Subsection~\ref{ssec:tt}, and then compute the projected matrix and the compute pseudo-inverse of a tensor in Subsection~\ref{ssec:PIT}. Next, we reformulate TDMD (we call it {\em reformulated TDMD}) in Subsection~\ref{ssec:rtdmd} and finally describe DMD for our problem, i.e., NLDSs with structures among observables in Subsection~\ref{ssec:dmdao}.
Note that although TDMD is applicable for analyzing higher-order complex dynamical systems, our problem considers a sequence of the matrices ${\cal{A}}\in \mathbb{R}^{m\times m\times (\tau+1)}$ as an input tensor, which is a sequence of the realization of the $\bm{K}({\bm{x}_t},\bm{x}_t)$'s structure.

\subsection{TT-format}
\label{ssec:tt}
In general, it is known that analyzing high-dimensional data becomes infeasible due to the so-called curse of dimensionality.
This could be moderated by exploiting low-rank tensor approximation approaches. 
Several tensor formats such as the canonical format, Tucker format, and TT-format have been developed for this purpose (see e.g.,~\cite{Grasedyck13}). Among these formats, the TT-format is known to be relatively stable and scalable for high-order tensors compared with the other formats~\citep{Oseledets11}. 

Here, we review the TT-format.
Let $ {\cal{A}} \in \mathbb{C}^{n_1 \times \dots \times n_d} $ be an order-$d$ tensor, where $n_l$ denotes the dimensionality of the $l$-th mode for $l = 1,\ldots,d$ (called \emph{full-format}).
In TT-decomposition (see \cite{Oseledets11}), ${\cal{A}}$ is decomposed into $d$ core tensors $ {\cal{A}} ^{(l)} \in  \mathbb{C}^{r_{l-1} \times n_l \times r_l}$, where $r_0 = r_d = 1 $.
$r_l$ is called TT-rank, which controls the complexity of TT decomposition.
For an elementary expression, any element of ${\cal{A}}$ is given by
${\cal{A}}_{i_1, \dots, i_d}
= \sum_{k_0=1}^{r_0} \dots \sum_{k_{d}=1}^{r_{d}}
{\cal{A}}^{(1)}_{k_0, i_1, k_1} \cdot \dotsc \cdot {\cal{A}}^{(d)}_{k_{d-1}, i_d, k_d},
$ where the subscripts of the tensors denote the indices.
Moreover, for two vectors $\bm{v} \in \C^{n_1}$ and $\bm{w} \in \C^{n_2}$, the tensor product $\bm{v} \otimes \bm{w} \in \C^{n_1 \times n_2} $ is given by $(\bm{v} \otimes \bm{w})_{i,j} = (\bm{v} \cdot  \bm{w}^\top)_{i,j} = v_i \cdot w_j$.
Using the tensor product, the whole tensor can then be represented as
${\cal{A}} = \sum_{k_0=1}^{r_0} \dots \sum_{k_d=1}^{r_d} {\cal{A}}^{(1)}_{k_0,:,k_1} \otimes \dots \otimes {\cal{A}}^{(d)}_{k_{d-1},:,k_d},
$
where colons are used to indicate all components of the mode, e.g., $ {\cal{A}}^{(l)}_{k_{i-1},:,k_i} \in \C^{n_l} $. 

To describe the matricizations and vectorizations (also called \emph{tensor unfoldings}) for efficient computation, let ${\cal{A}}_{i_1, \dots,i_{l},:,i_{l+1} \dots,i_d}$ denote an $n_l$-dimensional vector called the mode-$l$ fiber, where $ 1 \leq l \leq d-1 $.
For the two ordered subsets $N' = \{n_{1}, \ldots, n_{l}\}$ and $N'' = \{n_{l+1}, \ldots, n_{d}\}$ of $ N = \{n_1, \ldots, n_{d}\} $, the matricization of ${\cal{A}}$ with respect to $ N' $ and $ N''$ is denoted by
$\mat{{\cal{A}}}{N'}{N''} \in \C^{(n_1 \cdot \dotsc \cdot n_l) \times (n_{l+1} \cdot \dotsc \cdot n_d)} ,
$
which is defined by concatenating the mode-$l$ fibers of $\cal{A}$.
In the special case with $N' = N$ and $N'' = { \varnothing } $, the vectorization of $\cal{A}$ is given by $\vectorize({\cal{A}}) \in \C^{n_1 \cdot \dotsc \cdot n_d}$.


\subsection{Projected Matrix and Modified Pseudo-inverse for TT-format}
\label{ssec:PIT}

In this section, before reformulating the TDMD in TT-format, we modify the computation of the pseudo-inverse of a tensor in \citep{Klus18} and then obtain the projected matrix in the space spanned by the columns of $\mathcal U$ in Section~\ref{sec:formulation}. 
Note that although our problem considers a sequence of matrices ${\cal{A}}\in \mathbb{R}^{m\times m\times (\tau+1)}$ as an input tensor, TDMD is applicable for analyzing higher-order complex dynamical systems.  
For TDMD, consider $ \tau $ snapshots of $ d $-dimensional tensor trains ${\cal{X}}, {\cal{Y}} \in \C^{n_1 \times \dots \times n_d \times \tau} $, where $ {\cal{X}}_{:,\ldots,:,i+1}  \in \C^{n_1 \times \dots \times n_d}$ for $i = 0,\ldots,\tau-1$ and $ {\cal{Y}}_{:,\ldots,:,i}$ for $i = 1,\ldots,\tau$.
Let $r_0,  \dotsc, r_{d+1}$ and $s_0,  \dotsc, s_{d+1}$ be the TT-ranks of ${\cal{X}}$ and ${\cal{Y}}$, respectively. Now, let $\bm X,\bm Y \in \C^{n_1 \cdot \dotsc \cdot n_d \times \tau}$ be the specific matricizations of ${\cal{X}}$ and ${\cal{Y}}$, where we contract the dimensions $n_1, \dotsc, n_d$ such that every column of $\bm X$ and $\bm Y$ is the vectorization of the corresponding $\tau = n_{d+1}$ snapshot, respectively. 
 
To efficiently compute TDMD only with matrix products (without any tensor products), we first perform TT-decomposition of $\mathcal X$ and matricize to $\bm{X}$ as
\begin{equation} \label{eq:TTdecomposition}
\vspace*{-0mm}
    \bm{X} = \bm{M} \bm{\Sigma} \bm{N},
\vspace*{-0mm}
\end{equation}
where $\bm{M} = \mat{\left( \sum_{k_0 =1}^{r_0} \cdots \sum_{k_{d-1}=1}^{r_{d-1}} {\cal{X}}^{(1)}_{k_0, :, k_1 } \otimes \dotsc \otimes {\cal{X}}^{(d)}_{k_{d-1}, :, : } \right)}{n_1, \dotsc, n_d}{r_{d}}$, $\bm{N} = \mat{\left({\cal{X}}^{(d+1)}_{:, :, k_{d+1} } \right)}{r_d}{\tau}$, and $\bm{\Sigma}$ is a diagonal matrix with singular values in its diagonal elements computed by the SVD of $\mat{{\cal{X}}^{(d)}}{r_{d-1}, n_{d}}{r_{d}}$. 
$\bm{N}$ is equivalent to the last core ${\cal{X}} ^{(d+1)}$, which is a matrix because $r_{d+1} = k_{d+1} = 1$. 
Note that this is similar to SVD in the matrix form, but SVD and this matricization after TT-decomposition are completely different.
$\bm{M} \in \C^{n_1\cdots n_d \times r_d}$ computed by the first $d$ core of $\cal{X}$ is left-orthogonal
\footnote{In general, a matrix $\bm{A}$ is left-orthonormal if 
$\bm{A}^*  \bm{A} = \bm{I}$ and right-orthonormal if 
$\bm{A} \bm{A}^*  = \bm{I} $.} 
due to the procedure of TT-decomposition algorithm~\citep{Oseledets11}, and reflects some part of tensor structure of $\mathcal A$ when folding in full-format.  
In our problem for NLDSs with dependent structures among observables, 
$\bm{M} = \bm{X}\bm{N}^\dagger \bm{\Sigma}^{-1} \in \R^{m^2\times r_d}$ for $\bm{X} \in \R^{m^2 \times \tau}$ works as the projected matrix in the space spanned by the columns of $\mathcal U = {\mathcal M}_1\bm{\alpha}$ in Section~\ref{sec:formulation}.
For the details of the relation, see Appendix A. 

Next, we claim that the pseudo-inverse $\bm{X}^\dagger$ for the computations of the following TDMD is computed as shown in the following proposition:\footnote{The computation of the pseudo-inverse $\bm{X}^\dagger$ in \citep{Klus18} is described with the left and right-orthonormalization of the cores of $\mathcal X$ including QR decompositions in a mathematically general way. However, when considering TDMD (i.e., $l = d-1$ and $\bm N$ being a matrix), the proposed algorithm with the pseudo-inverse of $\bm N$ using SVD (without QR decomposition) is more directly and stably computable than the previous algorithm. The difference in the computational efficiency between them depends on the problem such as the TT-ranks and dimensions of the tensor.}

\begin{proposition}
\label{PropPI}
Assume that $\bm{X} \in \C^{n_1 \cdot \dotsc \cdot n_d \times \tau}$ matricized from $\mathcal X \in \C^{n_1 \times \dots \times n_d \times \tau}$ is decomposed as in Eq.~\eqref{eq:TTdecomposition}. 
Then, the pseudo-inverse $\bm X^\dagger$ is given by 
\begin{equation} \label{eq:representation of X^+}
    \bm{X}^\dagger = \bm{N}^\dagger \, \bm{\Sigma}^{-1} \, \bm{M}^*.
\vspace*{-0mm}
\end{equation}
\end{proposition}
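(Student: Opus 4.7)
The plan is to verify the formula by directly checking the four Moore--Penrose defining relations for the candidate $\bm N^{\dagger}\bm \Sigma^{-1}\bm M^{*}$, leveraging two structural facts supplied by the TT decomposition of Eq.~\eqref{eq:TTdecomposition}: $\bm M$ is left-orthonormal (so $\bm M^{*}\bm M=\bm I$ and $\bm M^{\dagger}=\bm M^{*}$, as stated in the text), and the diagonal matrix $\bm \Sigma$ collects the nonzero singular values from the internal SVD of $\mat{{\cal X}^{(d)}}{r_{d-1},n_{d}}{r_{d}}$, hence is square and invertible with $\bm \Sigma^{\dagger}=\bm \Sigma^{-1}$. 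Under these structural facts, the four Penrose conditions collapse into one-line identities.

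First I would compute
\[
\bm X^{\dagger}\bm X \;=\; \bm N^{\dagger}\bm \Sigma^{-1}\bm M^{*}\bm M\bm \Sigma\bm N \;=\; \bm N^{\dagger}\bm N,
\]
using $\bm M^{*}\bm M=\bm I$ and $\bm \Sigma^{-1}\bm \Sigma=\bm I$. Since $\bm N^{\dagger}\bm N$ is Hermitian by definition of the Moore--Penrose inverse, the fourth Penrose identity $(\bm X^{\dagger}\bm X)^{*}=\bm X^{\dagger}\bm X$ is immediate. Next I would analyze
\[
\bm X\bm X^{\dagger} \;=\; \bm M\bm \Sigma\bm N\bm N^{\dagger}\bm \Sigma^{-1}\bm M^{*};
\]
here the crucial observation is that $\bm N\in\C^{r_{d}\times\tau}$ has full row rank, so $\bm N\bm N^{\dagger}=\bm I_{r_{d}}$, collapsing the expression to the Hermitian projector $\bm M\bm M^{*}$ and yielding the third Penrose identity. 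The remaining two identities follow mechanically: $\bm X\bm X^{\dagger}\bm X=\bm M\bm M^{*}\bm M\bm \Sigma\bm N=\bm M\bm \Sigma\bm N=\bm X$, and $\bm X^{\dagger}\bm X\bm X^{\dagger}=\bm N^{\dagger}\bm N\bm N^{\dagger}\bm \Sigma^{-1}\bm M^{*}=\bm N^{\dagger}\bm \Sigma^{-1}\bm M^{*}=\bm X^{\dagger}$ by the standard relation $\bm N^{\dagger}\bm N\bm N^{\dagger}=\bm N^{\dagger}$.

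The main obstacle I anticipate is cleanly justifying the full row-rank claim on $\bm N$, since the other steps are little more than bookkeeping. I would argue it from the construction in Section~\ref{ssec:tt}: because $\bm \Sigma$ is built from the \emph{nonzero} singular values of $\mat{{\cal X}^{(d)}}{r_{d-1},n_{d}}{r_{d}}$, the TT-rank $r_{d}$ is chosen canonically equal to that rank; if $\bm N$ failed to have full row rank, the column space of $\bm M\bm \Sigma\bm N$ would have dimension strictly less than $r_{d}$, contradicting this choice and hence contradicting $\mathrm{rank}(\bm X)=r_{d}$. Alternatively, one can bypass the direct Penrose verification and instead apply the composite product rule $(\bm A\bm B)^{\dagger}=\bm B^{\dagger}\bm A^{\dagger}$ twice, once using that $\bm M$ has orthonormal columns and once using that $\bm \Sigma$ is invertible and $\bm N$ has full row rank; either route reduces the whole proposition to the same rank statement on $\bm N$, which is really the only content beyond left-orthonormality.
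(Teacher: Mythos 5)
Your proposal is correct and follows essentially the same route as the paper's own proof: direct verification of the four Moore--Penrose conditions using $\bm{M}^{*}\bm{M}=\bm{I}$, the invertibility of $\bm{\Sigma}$, and $\bm{N}\bm{N}^{\dagger}=\bm{I}$. The only difference is that you explicitly identify and justify the full row-rank condition on $\bm{N}$ that makes $\bm{N}\bm{N}^{\dagger}=\bm{I}$ hold, a point the paper simply asserts without argument, so your write-up is if anything slightly more complete.
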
 

\begin{proof}
Although $\bm M$ is left-orthogonal as mentioned above,
the last core $\bm{N} = {\cal{X}}^{(d+1)} \in \C^{r_d\times n_{d+1}}$ is not right-orthogonal, i.e., $\bm{N}\cdot \bm{N}^* \neq \bm{I}$. 
Then, we can use the pseudo-inverse matrix $\bm{N}^\dagger \in \C^{n_{d+1}\times r_d}$, i.e., $\bm{X}^\dagger = \bm{N}^\dagger \, \bm{\Sigma}^{-1} \, \bm{M}^*$.
Since $\bm{M}^*\cdot \bm{M} = \bm{I}$ and $\bm{N} \cdot \bm N^\dagger = \bm I$, it follows that the pseudo-inverse $\bm{X}^\dagger$ satisfies the necessary and sufficient conditions for the pseudo-inverse, i.e., it satisfies the following four equations:
\newpage 
\begin{align} 
\bm{X} \bm{X}^\dagger \bm{X} = \bm{M} \, \bm{\Sigma} \, \bm{N} \cdot \bm{N}^\dagger \, \bm{\Sigma}^{-1} \, \bm{M}^* \cdot \bm{M} \, \bm{\Sigma} \, \bm{N} = \bm{X},\\
\bm{X}^\dagger \bm{X} \bm{X}^\dagger = \bm{N}^\dagger \, \bm{\Sigma}^{-1} \, \bm{M}^* \cdot \bm{M} \, \bm{\Sigma} \, \bm{N} \cdot \bm{N}^\dagger \, \bm{\Sigma}^{-1} \, \bm{M}^* = \bm{X}^\dagger,\\
(\bm{X} \bm{X}^\dagger)^* = (\bm{M} \, \bm{M}^*)^* = \bm{M} \, \bm{M}^* = \bm{X} \bm{X}^\dagger,\\
(\bm{X}^\dagger \bm{X})^* = (\bm{N}^\dagger \, \bm{N})^* = \bm{N}^\dagger \, \bm{N} = \bm{X}^\dagger \bm{X}.
\end{align} 
For the fourth equation, we use the property of pseudo-inverse $(\bm{N}^\dagger \, \bm{N})^* = \bm{N}^\dagger \, \bm{N}$. 
\end{proof}
\subsection{Reformulated TDMD}
\label{ssec:rtdmd}
\subsubsection{TDMD solution}
Using similar matricizations of $\mathcal X$, we can also represent the tensor unfolding $\bm{Y}$ as a matrix product, i.e., 
\begin{equation} \label{eq:representation of Y}
        \bm{Y} = \mat{\left(\sum_{l_{0} =1}^{s_{0}} \cdots \sum_{l_{d-1}=1}^{s_{d-1}} {\cal{Y}}^{(1)}_{l_0, :, l_{1} } \otimes \dotsc \otimes {\cal{Y}}^{(d)}_{l_{d-1}, :, : }\right) }{n_1, \dotsc, n_d}{s_{d+1}} 
         \cdot \mat{{\cal{Y}}^{(d+1)}}{s_{d+1}}{\tau}
           = \bm{P} \, \bm{Q}.
\end{equation}
We abbreviate the indices of ${\cal{Y}}^{(d+1)}_{:, :, k_{d+1} }$ as ${\cal{Y}}^{(d+1)}$ because $r_{d+1} = k_{d+1} = 1$.
Note that we do not require any special property of the tensor cores of ${\cal{Y}}$.
Combining the representations of $\bm{X}^\dagger$ and $\bm Y$ and generalizing the basic DMD procedure in Section~\ref{sec:koopman} to the tensor form, we can express the rank-reduced DMD solution $\hat{\bm F} \in \C^{r_{d+1} \times r_{d+1}}$ (equivalent to $\hat{\bm F}$ of Graph DMD in Section~\ref{sec:formulation}) as 
\begin{equation} \label{eq:reduced matrix in TT}
    \hat{\bm F} = \bm{M}^* \bm{Y} \cdot \bm{X}^\dagger \bm{M} =  \bm{M}^* \cdot \bm{P} \, \bm{Q} \cdot \bm{N}^\dagger  \, \bm{\Sigma}^{-1}.
\end{equation}
To compute $\hat{\bm F}$ in Eq. (\ref{eq:reduced matrix in TT})
, we bypass this computational cost by splitting Eq. (\ref{eq:reduced matrix in TT}) 
into different parts. 
First, we consider that in the rank-reduced $\bm{M}^* \cdot \bm{P} \in \R^{r_d \times s_d}$, any entry is given by

\begin{equation}
   ( \bm{M}^* \cdot \bm{P} )_{i,j} = 
    \sum_{{k_0 =1}}^{r_0} \cdots \sum_{k_{d-1}=1}^{r_{d-1}} \sum_{l_{0}=1}^{s_{0}} \cdots \sum_{{l_{d-1}=1}}^{s_{d-1}} 
\left( {\cal{X}}^{(1)}_{k_0, :, k_1 } \right)^T {\cal{Y}}^ {(1)}_{l_0, :, l_1} \cdot \dotsc \cdot \left( {\cal{X}}^{(d)}_{k_{d-1}, :, i } \right)^T {\cal{Y}}^ {(d)}_{l_{d-1}, :, j}.
\end{equation}
This is based on the following computation: 
$\vectorize({\cal{X}})^T \cdot \vectorize({\cal{Y}}) = \Pi_{l=1}^d \left( {\cal{X}}^{(l)}\right)^T \cdot {\cal{Y}}^{(l)}$.
In this way, we can compute $\bm{M}^* \cdot \bm{P}$ without leaving the TT-format, and we only have to reshape certain contractions of the TT-cores. This computation can be implemented efficiently using Algorithm 4 from \citep{Oseledets11}. The result assumes that the TT-ranks of ${\cal{X}}$ and ${\cal{Y}}$ are small compared to the entire state space of these tensors. Indeed, the tensor ranks $r_d$ and $s_d$ are both bounded by the number of snapshots $\tau$. 
Second, for $\bm{Q} \cdot \bm{N}^\dagger$ in Eq. (\ref{eq:reduced matrix in TT})
, we simply obtain
\begin{equation}
    \bm{Q} \cdot \bm{N}^\dagger = \mat{{\cal{Y}}^{(d+1)}}{s_{d}}{\tau} \cdot \left( \mat{{\cal{X}}^{(d+1)}}{r_{d}}{\tau} \right)^\dagger.
\end{equation}
In this computation, we do not need to convert any tensor products of the cores of ${\cal{X}}$ or ${\cal{Y}}$, into full tensors during our calculations.

\subsubsection{TDMD mode} 
Next, we consider the computation of the DMD modes of $\hat{\bm F}$. If $ \lambda_1, \dotsc, \lambda_{p} $ are the eigenvalues of $ \hat{\bm F} $ corresponding to the eigenvectors $ \bm{w}_1, \dotsc, \bm{w}_{p} \in \C^{r_{d+1}} $, then the vectorized DMD modes $ \bm{\varphi}_1, \dotsc, \bm{\varphi}_{p} \in \C^{n_1 \cdot \ldots \cdot n_d} $ of $\bm{F}$ (as in Section~\ref{sec:koopman}) are given by $\bm{\varphi}_j = (1/\lambda_j)\cdot \bm{P} \, \bm{Q} \cdot \bm{N}^\dagger \, \bm{\Sigma}^{-1} \cdot \bm{w}_j,$ for $j = 1, \dotsc, p$. Tensor representation ${\cal{Z}} \in \C^{n_1 \times \ldots \times n_d \times p}$ including all DMD modes is given by
\begin{equation}
    {\cal{Z}} = \sum_{l_0 =1}^{s_0} \cdots \sum_{l_{d}=1}^{s_{d}} {\cal{Y}}^{(1)}_{l_0, :, l_1 } \otimes \dotsc \otimes {\cal{Y}}^{(d)}_{l_{d-1}, :, l_d } 
    \otimes {\left(\bm{Q} \cdot \bm{N}^\dagger \, \bm{\Sigma}^{-1} \cdot \bm{W} \cdot \bm{\Lambda}^{-1} \right)} \,^{\phantom{d}}_{l_d, :},
\end{equation}
again with $\vectorize({\cal{Z}}_{:, \dotsc, :, j}) = \bm{\varphi}_j$ and $\bm{\Lambda}$ is a diagonal matrix arranging $\lambda_1, \ldots, \lambda_{p}$. 

The overall algorithm of the reformulated TDMD is shown in Algorithm \ref{alg:TDMD}.
We can express the DMD modes using given tensor trains $ {\cal{X}} $ and ${\cal{Y}}$, modifying just the last core.
In this case, we benefit from not leaving the TT-representations of $ {\cal{X}} $ and ${\cal{Y}}$. 
In other words, the bottleneck of this algorithm regarding scalability would be sequential SVDs in TT-decompositions of $ {\cal{X}} $ and ${\cal{Y}}$.

\begin{algorithm}[h!]
   \caption{Reformulated Tensor-based DMD}
   \label{alg:TDMD}
\begin{algorithmic}[1]
   \STATE {\bfseries Input:} $\mathcal{X}, \mathcal{Y} \in \C^{n_1 \cdot \dotsc \cdot n_d \times \tau}$
   \STATE {\bfseries Output:} dynamic mode tensor $\mathcal{Z}$ and eigenvalue matrix $\bm \Lambda$
   \STATE $\bm M$, $\bm \Sigma$, $\bm N$ $\leftarrow$ matricized after decomposition of $\mathcal X$;
   \STATE $\bm N^\dagger$ $\leftarrow$ pseudo-inverse of $\bm N$;
   \STATE $\bm P$, $\bm Q$  $\leftarrow$ matricized after decomposition of $\mathcal Y$;
   \STATE $\hat{\bm F}$ $\leftarrow$  $(\bm M^* \cdot \bm P)(\bm Q \cdot \bm N^\dagger)\bm{\Sigma}^{-1}$;
   \STATE $\bm \Lambda, \bm W$ $\leftarrow$ eigendecomposition of $\hat{\bm F}$;
   \STATE $\mathcal{Z}$ $\leftarrow$ $\sum_{l_0 =1}^{s_0} \cdots \sum_{l_{d}=1}^{s_{d}} {\cal{Y}}^{(1)}_{l_0, :, l_1 } \otimes \dotsc \otimes {\cal{Y}}^{(d)}_{l_{d-1}, :, l_d } 
     \otimes {\left(\bm Q \cdot \bm N^\dagger \, \bm{\Sigma}^{-1} \cdot \bm W \cdot \bm{\Lambda}^{-1} \right)} \,^{\phantom{d}}_{l_d, :}$;
   \STATE {\bfseries return:} $\mathcal{Z}$, $\bm \Lambda$;
\end{algorithmic}
\end{algorithm}

\subsection{DMD for NLDSs with structures among observables}
\label{ssec:dmdao}

In DMD for NLDSs with structures among observables, as a special case of the above reformulated TDMD in \ref{ssec:rtdmd}, we use a sequence of the matrices $\mathcal A \in \R^{m \times m \times (\tau+1)}$ as a sequence of the realization of the $\bm{K}({\bm{x}_t},\bm{x}_t)$'s structure for $t=0,\ldots,\tau$ in reformulated TDMD, i.e., $d=2$ and $n_1 = n_2 = m$. 
Input tensors $\mathcal{X}$ and $\mathcal{Y}$ are created from $\mathcal{A}_{:,:,t}$ and $\mathcal{A}_{:,:,t+1}$ for $t = 0,\ldots,\tau-1$, respectively.
As a result, we obtain DMD modes $\bm{\psi}_j \in \C^{m \times m}$ as in Section~\ref{sec:formulation} by matricizing $\bm{\varphi}_j$ (or $\mathcal{Z}_{:,:,j}$) with eigenvalues $\lambda_j$. 
The overall algorithm is shown in Algorithm \ref{alg:GDMD}.

\begin{algorithm}[h!]
   \caption{DMD for NLDSs with structures among observables}
   \label{alg:GDMD}
\begin{algorithmic}[1]
   \STATE {\bfseries Input:} sequence of the matrices $\mathcal A \in \R^{m \times m \times (\tau+1)}$
   \STATE {\bfseries Output:} dynamic mode matrix $\bm{\psi}_j \in \R^{m \times m}$ and eigenvalue $\lambda_j$
   \STATE $\mathcal{X}, \mathcal{Y} \in \R^{m \times m \times \tau}$ $\leftarrow$ make tensors from $\mathcal{A}$;
   \STATE $\bm M$, $\bm \Sigma$, $\bm N$ $\leftarrow$ matricized after decomposition of $\mathcal X$;
   \STATE $\bm N^\dagger$ $\leftarrow$ pseudo-inverse of $\bm N$;
   \STATE $\bm P$, $\bm Q$  $\leftarrow$ matricized after decomposition of $\mathcal Y$;
   \STATE $\hat{\bm F}$ $\leftarrow$  $(\bm M^* \cdot \bm P)(\bm Q \cdot \bm N^\dagger)\bm{\Sigma}^{-1}$;
   \STATE $\bm \Lambda, \bm W$ $\leftarrow$ eigendecomposition of $\hat{\bm F}$;
   \STATE $\mathcal{Z}$ $\leftarrow$ $\sum_{l_0 =1}^{s_0} \sum_{l_{1}=1}^{s_{1}} \sum_{l_{2}=1}^{s_{2}} {\cal{Y}}^{(1)}_{l_0, :, l_1 } \otimes {\cal{Y}}^{(2)}_{l_{1}, :, l_2} 
     \otimes {\left(\bm Q \cdot \bm N^\dagger \, \bm{\Sigma}^{-1} \cdot \bm W \cdot \bm{\Lambda}^{-1} \right)} \,^{\phantom{2}}_{l_2, :}$;
   \STATE {\bfseries return:} $\bm{\psi}_j = {\cal{Z}}_{:, :, j}$, $\lambda_j = \bm \Lambda_{j,j}$;
\end{algorithmic}
\end{algorithm}

\section{Graph DMD}
\label{sec:gdmd}
In this section, as a special case of DMD for NLDSs with structures among observables in \ref{ssec:dmdao}, we propose the method named as \textit{Graph DMD}, which is a numerical algorithm for Koopman spectral analysis of GDSs.
According to the notation of \citep{Cliff16}, we consider an autonomous discrete-time weighted and undirected GDS defined as
\begin{flalign}\label{eq:GDS}
G = (\mathcal{V},\mathcal{E},\bm{x}_t,\bm{y}_t,\bm{f}, \bm{g}, \bm{A}_t),
\end{flalign}
where $\mathcal{V}=\{V^1,\ldots,V^m\}$ and $\mathcal{E} = \{E^1,\ldots,E^{l}\}$ are the vertex and edge sets of a graph, respectively, fixed at each time $t\in\mathbb{T}$.
$\bm{x}_{t} \in \mathcal{M} \subset \R^{p}$ for the GDS and $\bm{f}\colon\cal{M}\to\cal{M}$ is a (typically, nonlinear) state-transition function (i.e., $\bm{x}_{t+1} = \bm{f}(\bm{x}_t)$).
$\bm{y}_{t} \in \R^{m}$ are observed values that correspond to vertices and are given by $\bm{y}_{t}:= \bm{g}(\bm{x}_t)$, where $\bm{g}\colon \mathcal M\to \R^m$ is a vector-valued observation function. 
$\bm{A}_t \in \R^{m \times m}$ is an adjacency matrix, whose component $a_{i,j,t}$ represents the weight on the edge between $V^i$ and $V^j$ at each time $t$.
For example, the weight represents some traffic volume between the locations in networks or public transportations. Another example of the weights for undirected GDSs is the relation between moving agents (such as distances) in multi-agent systems \cite{Couzin02,Fujii16}.

In Graph DMD, we consider a sequence of adjacency matrices $\bm{A}_t \in \mathbb{R}^{m\times m} $ for $t = 0,...,\tau$ or $\mathcal A \in \R^{m \times m \times (\tau+1)}$ as input.  
Here, we assume that the adjacency matrix $\bm{A}_t$ observed at each time is a realization of the structure of the kernel matrix $\bm{K}({\bm{x}_t},\bm{x}_t)$ in Section \ref{sec:formulation}. 
That is, the weight of $\bm{A}_t$ is assumed to represent the correlation between the observables.
Again, in our formulation in the vvRKHS determined by $\bm{K}({\bm{x}_t},\bm{x}_t)$, it is necessary for $\bm{K}({\bm{x}_t},\bm{x}_t)$ to be a symmetric positive semidefinite matrix (i.e., we consider an undirected graph). 
For an implementation of Graph DMD, we use a sequence of adjacency matrices $\mathcal A \in \R^{m \times m \times (\tau+1)}$ in DMD for NLDSs with structures among observables.
That is, we only replace the sequence of the matrices in Algorithm \ref{alg:GDMD} with a sequence of adjacency matrices.
As a result, similarly in Algorithm \ref{alg:GDMD}, we obtain DMD modes $\bm{\psi}_j \in \C^{m \times m}$ with eigenvalues $\lambda_j$.

\vspace*{-0mm}
\section{Relation to previous works}
\label{sec:related}
\vspace*{-0mm}
\subsection{Dynamic mode decomposition}
Spectral analysis (or decomposition) for analyzing dynamical systems is a popular approach aimed at extracting low-dimensional dynamics from data.
DMD, originally proposed in fluid physics~\citep{Rowley09,Schmid10}, has recently attracted attention also in other areas of science and engineering, including analysis of power systems~\citep{Susuki14}, epidemiology~\citep{Proctor15}, neuroscience~\citep{Brunton16a}, image processing~\citep{Kutz16b,Takeishi17d}, controlled systems~\citep{Proctor16}, and human behaviors~\citep{Fujii17,Fujii18a,Fujii19a}. Moreover, there are several algorithmic variants to overcome the problem of the original DMD such as the use of nonlinear basis functions~\citep{Williams15}, a formulation in a reproducing kernel Hilbert space~\citep{Kawahara16}, in a supervised learning framework via multitask learning~\citep{Fujii19b}, in a Bayesian framework~\citep{Takeishi17}, and using a neural network~\citep{Takeishi17b}. 
For interconnected systems, e.g., Susuki and Mezi{\'c} \citep{Susuki11} computed Koopman modes of coupled swing dynamics in power systems, and Heersink et al. \citep{Heersink17} proposed DMD for (simulated) interconnected control systems, which extends DMD with control~\citep{Proctor16}.
Note that these are basically formulated without considering the structures among observables unlike our formulation described in this paper.

\subsection{Vector-valued RKHSs}
RKHSs of vector-valued function (vvRKHS), endowed with a matrix-valued or operator-valued kernel~\cite{Caponnetto08}, have attracted an increasing interest as the methods to deal with such as classification or regression problem with multiple outputs (e.g., \cite{Alvarez12,Micchelli05}). 
In real-world problems, this approach has applied to such as image processing~\cite{Quang10} and medical treatment effects~\cite{Alaa17}.
Gaussian processes for vector-valued functions have also been formulated using the covariance kernel matrix \cite{Alvarez12}.
We first formulated the modal decomposition methods in the vvRKHS with the assumption that the vector-valued observable follows Gaussian process.
Other researchers performed spatiotemporal pattern extraction by spectral analysis of vector-valued observables using operator-valued kernel \cite{Giannakis18}, but did not formulate in vvRKHSs and directly extract the dynamical information about the dependent structure among observables.

\subsection{Other algorithms for graph data}
For signal processing of a graph, researchers have basically examined the graph property in the graph (spatial) domain such as using graph Laplacians (e.g.,~\cite{Chung97}), graph Fourier transforms (e.g., \cite{Taubin95}), and graph convolutional networks (e.g., \cite{Defferrard16}).
Graph Laplacians and other extensions have been also used for regularization by utilizing data structures (e.g.,\cite{Liu16} ).  
For the graph sequence data in several scientific fields (see Section~\ref{sec:introduction}), various analyses have been examined such as using topological variables~\citep{Bullmore09} and objective variables in simulation~\citep{Centola07,Kuhlman11} in each snapshot of the graph time series, or performed graph abnormality detection~\citep{Ide04} with the temporal sliding windows of the time series.
A few methods have been directly applied to the graph time series such as using graph convolutional networks~(e.g., \citep{Seo16}). 
Meanwhile, our method has advantages to directly extract the underlying low-dimensional dynamics of GDSs.

\vspace*{-0mm}
\section{Experimental Results}
\label{sec:example}
\vspace*{-0mm}

We conducted experiments to investigate the empirical performance of our method (for clarity, we called it Graph DMD in this section) using synthetic data in Subsection~\ref{ssec:synthetic}. 
Then, we examined the applications to extract and visualize specific spatiotemporal dynamics in a real-world bike-sharing system data in Subsection~\ref{ssec:bike} and in fish-schooling simulation data as an example of unknown global dynamics in Subsection~\ref{ssec:fish}.
Note that, as mentioned in Section \ref{sec:related}, most of the conventional methods for a graph have basically extracted the graph property in the graph (spatial) domain. 
Meanwhile, our method directly extracts the underlying low-dimensional dynamics among observables, which cannot be estimated by these methods.
Therefore, we did not compare conventional methods for a graph but compare the conventional DMD algorithms with our method.
 
\subsection{Synthetic Data}
\label{ssec:synthetic}
We first validated the performance of our method to extract the dynamical information on synthetic data.
We generated a sequence of noisy adjacency matrix series $\bm{A}_t \in \R^{D \times D}$ using the following equations: 
\begin{equation}
\bm{A}_t= 0.99^t \bm{A}_{m_1} + 0.9^t \bm{A}_{m_2} + \ee_t,
\end{equation}
where $\bm{A}_{m_1},\bm{A}_{m_2} \in \R^{D \times D}$ (and $\bm{A}_t$ for every $t$) are the adjacency matrices shown in Figure~\ref{fig:Synthetic}a and d, respectively.
Black and white indicate lower and higher values, respectively.
$D$ was set to 64 and 256 for examining the effect of data dimension (Figure 1 is for $D=64$).
Each element of $\ee_t \in \R^{D \times D}$ is independently and identically sampled from a zero-mean Gaussian with variance $1\mathrm{e}{-02}$.
In this case, the true spatial dynamic modes are $\bm{A}_{m_1}$ and $\bm{A}_{m_2}$, with the corresponding DMD eigenvalues 0.99 and 0.9 (mode 1 and mode 2), respectively.
We here estimated the spatial and temporal modes from noisy data using our method (Graph DMD) and the exact DMD \cite{Tu14} as a baseline (described in Section~\ref{sec:koopman}).
In Graph DMD (or reformulated Tensor-based DMD), TT-decomposition parameter $\varepsilon$ (i.e., the tolerance in the successive SVD) is critical for estimating a few DMD mode such as in this case.
The estimation performances are computed using the average values of 10 tasks.

The estimation results are shown in Figure~\ref{fig:Synthetic} and Table \ref{tab:Synthetic}.
The effect of the estimation errors for the two leading eigenvalue was evaluated by the relative errors defined by $ \Delta |\lambda| = | \lambda - \tilde {\lambda} | / |\lambda|$, where $\lambda$ is the estimated eigenvalue and $\tilde{\lambda}$ is the ground truth of the eigenvalues (0.99 and 0.9 for modes 1 and 2, respectively).  
Our proposed method with $ \varepsilon = 1\mathrm{e}{-01}$ was more accurate than that with exact DMD.
Note that in this experiment, the result of our method was the same as exact DMD when $ \varepsilon \leq 1\mathrm{e}{-02}$ and our method extracted only one mode (i.e. one eigenvalue) when $ \varepsilon > 1\mathrm{e}{-01}$.
With respect to the size effect of the adjacency matrix, the larger the size, the higher is the estimation error because of the larger amount of noise.
For the two leading spatial DMD modes, our method with $\varepsilon = 1\mathrm{e}{-02}$ decreased more noise (especially in Figure~\ref{fig:Synthetic}f) than the exact DMD shown in Figure~\ref{fig:Synthetic}e (the results of our method with $ \varepsilon = 1\mathrm{e}{-02}$ are the same as those for the exact DMD). 
In addition, we confirmed that there were almost no differences in the eigenvalues between Graph DMD and the original TDMD\cite{Klus18} ($< 1\mathrm{e}{-12}$ for all eigenvalues).

\begin{figure}[t]
\centering
\includegraphics[width=0.8\columnwidth]{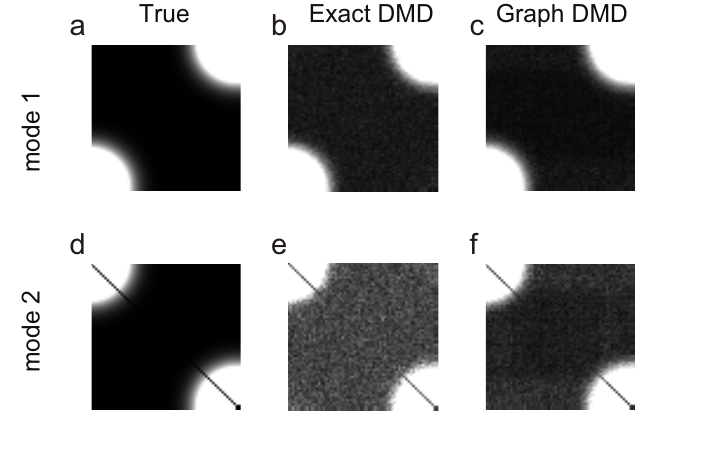}
\vspace{-8mm}
\caption{Two spatial DMD modes for two temporal modes estimated by each method. (a) and (d): ground truth. (b) and (e): results of exact DMD. (c) and (f): results of Graph DMD.
}
\label{fig:Synthetic}
\end{figure}

\begin{table}[h]

    \centering
    \caption{Estimation error of the two leading DMD eigenvalue for the numerical example. The entries show the relative errors $\Delta |\lambda|$ for different values of $ \varepsilon $ and $D$.}
    \label{tab:VertexErrors}
    \newcommand{\md}[2]{\multicolumn{#1}{c|}{#2}}
    \newcommand{\me}[2]{\multicolumn{#1}{c}{#2}}
    \scalebox{0.85}
    {\label{tab:Synthetic}
    \begin{tabular}{|l|rr|rr|}
        \hline
        & \md{2}{size $64 \times 64$} & \md{2}{size $256 \times 256$} \\
        ~~~$\Delta |\lambda|$ & \me{1}{Mode 1} & \md{1}{Mode 2} & \me{1}{Mode 1} & \md{1}{Mode 2} \\ \hline
DMD                                     & 1.49e$-$03 & 9.04e$-$03 & 1.85e$-$02 & 4.05e$-$01 \\
$\varepsilon=1\mathrm{e}{-02} $ & 1.49e$-$03 & 9.04e$-$03 & 1.85e$-$02 & 4.05e$-$01 \\
$\varepsilon=1\mathrm{e}{-01} $ & 9.33e$-$04 & 6.15e$-$03 & 1.85e$-$02 & 3.74e$-$01 \\ \hline
    \end{tabular}}
    \vspace{-3mm}
\end{table}

\subsection{Bike-sharing data}
\label{ssec:bike}
One of the direct applications of our method is to extract the dynamical structure among observables.
In some real-world datasets, we can use prior knowledge about the dynamics such as biological rhythms (e.g., a day, month, and year).
Then, our method can extract the spatial (e.g., graph) coherent structure for the focusing dynamics (e.g., rhythm or frequency).
Here, we extracted and illustrated graph (spatial) DMD modes with real-world bike-sharing system data.
The bike-sharing data consisted of the numbers of bikes returned from one station to another in an hour in Washington D.C\footnote{https://www.capitalbikeshare.com/}.  
We collected a sum of the numbers of bikes transported between the two different stations for both directions for use as an undirected graph series.
We selected 14 days from 2nd Sunday of every month of 2014 for 348 bike stations, and constructed the sequence of adjacency matrices $\mathcal X \in \R^{348\times348\times336}$.
We consider that the relation between locations is stronger as the number of bikes increases.
In this experiment, to obtain the smooth adjacency matrix series to extract dynamic properties, we summed up 14 days of data for every month and perform 12-point (i.e., half day) moving average.
Figure \ref{fig:Bike}a shows an example of the preprocessed number of bikes between Lincoln memorial and three stations with a maximal number of bikes moving from/to Lincoln memorial.
These seemed to be coherent and cycled at daily and weekly cycles.

\begin{figure*}[h!]
\centering
\includegraphics[width=1\columnwidth]{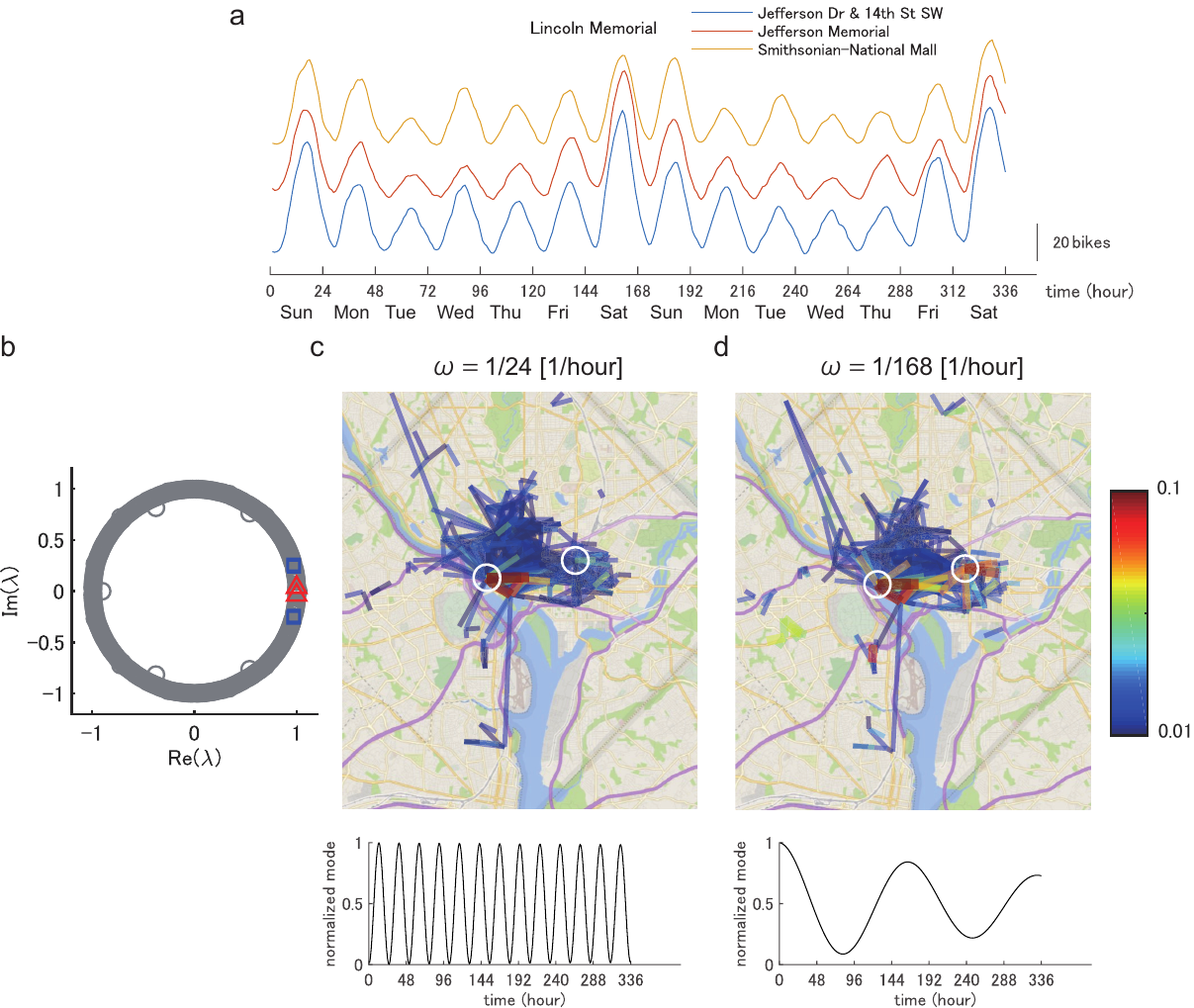}
\caption{Temporal and spatial modes of Graph DMD on the bike-sharing data of Washington D.C. 
(a) an example of the number of bikes between Lincoln memorial and three stations with a maximal number of bikes moving from/to Lincoln memorial. 
(b) a result of DMD eigenvalue. The blue square and red triangle indicate eigenvalues of approximately $\omega = 1/24$ and $\omega = 1/168$, respectively. (c) and (d): Graphical representation of the amplitude of spatial DMD modes in $\omega = 1/24$ and $\omega = 1/168$, respectively.
The left and right circles indicate the Lincoln Memorial and Columbus Circle / Union Station, respectively.
Lower time series in (c) and (d) are examples of the extracted temporal dynamics corresponding to the above spatial modes (visualized spatial modes are averaged among multiple modes).
For improving the visibility, we used only an eigenvalue and set initial values 0 and 1 for (c) and (d), respectively.
}
\label{fig:Bike}
\end{figure*}

Figure 2b shows the eigenvalues estimated by Graph DMD. %
We confirm that most eigenvalues are on the unit circle, indicating that the dynamics were almost oscillators. %
Among these eigenvalues, we focused on the specific temporal modes of the traffic such as daily and weekly periodicity~\citep{Takeuchi17}, i.e.,
we selected $\omega = |{\rm{Im}}({\rm{log}} (\lambda))|/\Delta t /(2\pi) = \{1/24, 1/168\}$, where $\lambda$ is Graph DMD eigenvalue ($\Delta t = 1$ [hour]).

Figure~\ref{fig:Bike}c and d shows the spatial (graph) pattern of the DMD mode for approximately $\omega = 1/24$ and $\omega = 1/168$ on the bike station map, respectively.
Although the bike transportation near Lincoln memorial (left circle in Figures \ref{fig:Bike}c and d) shows a stronger spectrum for both daily and weekly periodicity, the bike transportation in a downtown area near Union Station (right circle in Figures \ref{fig:Bike}c and d) for weekly periodicity shows a stronger spectrum than that for daily periodicity.

Overall, our method can extract the different spatial (graph) modes for specific temporal modes based on the dynamical structure.
Note that in the formulation (again, in functional space), we also assume that the covariance matrix $\bm{K}$ is a symmetric positive semi-definite matrix-valued function.
Numerically, however, as is the case of real-world data, we did not assume that the (symmetric) adjacency matrix $\bm A_t$ is positive semi-definite. 
In Appendix B, we proposed the alternative to modify them to positive semi-definite matrices  (the results were similar to Figure \ref{fig:Bike}). 

\subsection{Fish-schooling model}
\label{ssec:fish}
Next, we evaluated our method using a example with unknown global dynamics, because in some real-world (especially biological) data, the true global spatiotemporal structure is sometimes unknown \citep{Fujii18a,Hojo18}.
For evaluation, here we used well-known collective motion models~\citep{Vicsek95} with simple local rules to generate multiple distinct group behavioral patterns (Figure~\ref{fig:Fish}a): swarm, torus, and parallel behavioral shapes. 
The detailed configuration and simulation of the experiments are described in Appendix C and D, respectively. 
We used Gaussian kernels to create the sequences of adjacency matrices using inter-agent distance (for details, see Appendix E) because the local rules were applied based on the distance. 
First, the results in the temporal DMD mode, interpolating the discrete frequency spectra, exhibit a relatively wide spectrum for the swarm (Figure~\ref{fig:Fish}b), a narrow spectrum for the torus (Figure~\ref{fig:Fish}e) and parallel (Figure~\ref{fig:Fish}g). 
Among these spectra, we focused on characteristic low- (0-2 Hz) and high-frequency (2-4 Hz) modes.
The spectra in the swarm (Figure~\ref{fig:Fish}c,d) and torus (Figure~\ref{fig:Fish}f) show relatively stronger spectra nearer individuals, compared with that in the parallel (Figure~\ref{fig:Fish}h).
Thus, our method can visualize the observed interaction behaviors. 

\begin{figure}[h!]
\centering
\includegraphics[width=1\columnwidth]{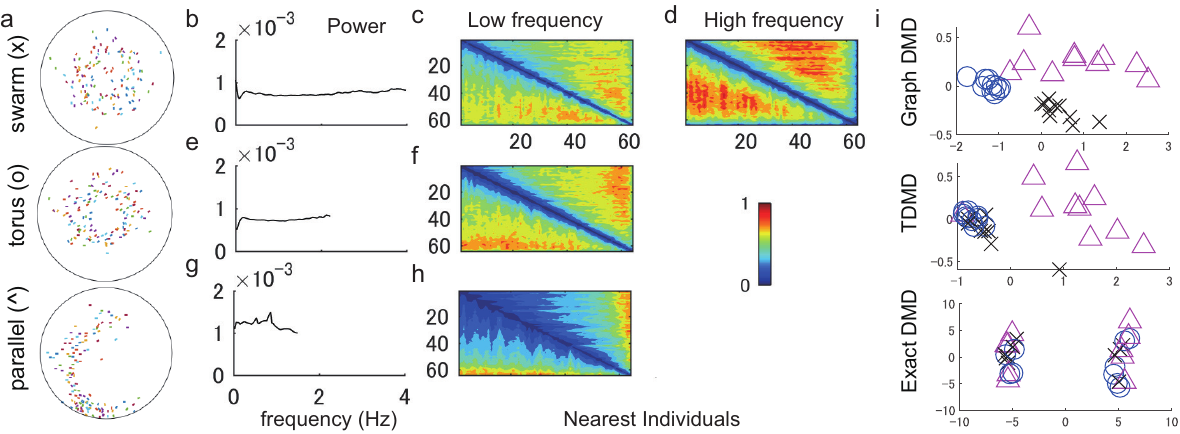} 
\vspace{-0mm}
\caption{Results with fish-schooling simulations. (a) Three different behavioral shapes. Temporal frequency (b,e,g) and spatial DMD spectra in low (c, f, h) and in high frequency mode (d) are shown. 
(i) Embedding with distance matrix of three methods. Symbols are given in (a).
}
\label{fig:Fish}
\vspace{-0mm}
\end{figure}

Although a direct and important application of Graph DMD is the extraction of the dynamical information for GDS, it can also perform embedding and recognition of GDSs using extracted features based on the dynamical structure.
For embedding the distance matrix with DMD modes such as using multidimensional scaling (MDS), the components of the distance matrix depend on the problem.
In this experiment, we compute the distance matrix between the temporal frequency modes by the alignment of the number of dimensions from larger frequencies, because of the results shown in Figure~\ref{fig:Fish}b,e,g. 
As comparable methods to extract dynamical information, we compared the result of our method with those of reformulated TDMD using the Cartesian coordinates and exact DMD breaking the tensor data structure (for details, see Appendix E).  
In Figure~\ref{fig:Fish}i, our method apparently distinguished the three types whereas reformulated TDMD and exact DMD did not.
We quantitatively evaluated the classification error with $k$-nearest neighbor classifier ($k = 3$) for simplicity.
We used 45 sequences in total and computed averaged 3-fold cross-validation error.
The classification error in Graph DMD (0.022) was smaller than those in TDMD (0.311) and exact DMD (0.511).

\vspace*{-0mm}
\section{Conclusions}
\label{sec:conclusion} 
\vspace*{-0mm}
In this paper, we formulated Koopman spectral analysis for NLDSs with structures among observable and proposed an estimation algorithm for performing it with a given sequence of data matrix with dependent structures among observables, which can be useful for understanding the latent global dynamics underlying such NLDSs from the available data.
To this end, we first formulated the problem of estimating the spectra of Koopman operator defined in vvRKHSs to incorporate the structure among observables, and then developed a procedure for applying this to the analysis of such NLDSs by reformulating Tensor-based DMD.
As a special case of our method, we proposed the method named as Graph DMD, which is a numerical algorithm for Koopman spectral analysis of graph dynamical systems, using a sequence of adjacency matrices.
We further considered applications using our method, which were empirically illustrated using both synthetic and real-world datasets.

\section*{Acknowledgments}
We would like to thank Naoya Takeishi, Isao Ishikawa, Masahiro Ikeda for the beneficial discussion. This work was supported by JSPS KAKENHI Grant Numbers 16K12995, 16H01548, 18K18116, and 18H03287, and AMED Grant Number JP18dm0307009. 



\bibliography{main_} 

\newpage

\setcounter{page}{1}
\appendix
\section*{Supplementary Materials}

\section{Relation between the projection and projected matrix in Section 3 and 4}
\label{relation}
In Section 3, we consider the orthogonal directions $\bm{\nu}_j = \sum_{t=0}^{\tau-1}\bm{\phi}_{\bm{c}}(\bm{x}_t)\alpha_{j,t} = \mathcal M_1 \bm{\alpha}_{j}$, where $\alpha_{j,t} \in \R$ and $\bm{\alpha}_j \in \R^{\tau}$.
Recall that we let ${\mathcal{U}} = [\bm{\nu}_1,\ldots,\bm{\nu}_p]$ and ${\mathcal{U}} = \mathcal M_1 \bm{\alpha}$ with the coefficient matrix $\bm{\alpha} \in \R^{\tau \times p}$. 
In Section 4, we mention that we need to compute the projected matrix $\bm M$ using the above projection function ${\mathcal{U}}$.
Here, we use the property of the feature map for time $t = 0,\ldots,\tau-1$: 
\begin{equation}
\label{eq:app1}
\left< \bm{\phi}_{\bm{c}}(\bm{x}_t), \bm{\phi}_{\bm{c}}(\bm{x}_t) \right>_{\bm{K}} = {\bm{c}_t}^T{\bm{K}(\bm{x}_t,\bm{x}_t)}{\bm{c}_t} = \vectorize(\bm{c}_t\bm{c}_t^T)^T \vectorize(\bm{K}(\bm{x}_t,\bm{x}_t)).
\end{equation}
If we multiply $\bm{\phi}_{\bm{c}}(\bm{x}_t)^*$ for $t = 0,\ldots,\tau-1$ from the left side of each column of $\mathcal{U}$, we obtain 
\begin{flalign}
&[\left< \bm{\phi}_{\bm{c}}(\bm{x}_0), \bm{\phi}_{\bm{c}}(\bm{x}_0) \right>_{\bm{K}},\ldots,\left< \bm{\phi}_{\bm{c}}(\bm{x}_{\tau-1}), \bm{\phi}_{\bm{c}}(\bm{x}_{\tau-1}) \right>_{\bm{K}}]\bm{\alpha}\label{eq:app2} \\ 
&= [\vectorize(\bm{c}_0\bm{c}_0^T)^T\vectorize(\bm{K}(\bm{x}_1,\bm{x}_1)),\ldots, \vectorize(\bm{c}_{\tau-1}\bm{c}_{\tau-1}^T)^T\vectorize(\bm{K}(\bm{x}_{\tau-1},\bm{x}_{\tau-1}))]\bm{\alpha}.
\nonumber
\end{flalign}
We consider the projected matrix mentioned in Section 3 as 

\noindent $ [\vectorize(\bm{K}(\bm{x}_0,\bm{x}_0)),\ldots, \vectorize(\bm{K}(\bm{x}_{\tau-1},\bm{x}_{\tau-1}))]\bm{\alpha}$.
In Section 4.2, we obtain $\bm{M} = \bm{X}\bm{N}^\dagger \bm{\Sigma}^{-1} $ as a projected matrix for Graph DMD computation. 
Recall that we regard the observed $\mathcal A_{:,:,t} = \bm{A}_t$ as the realization of the structure of $\bm{K}(\bm{x}_t,\bm{x}_t)$ and we matricize $\mathcal X = \mathcal A_{:,:,0:\tau-1} $ to obtain $\bm{X} = [\vectorize(\bm{A}_0),\ldots,\vectorize(\bm{A}_{\tau-1})]$.
Then, $\bm{N}^\dagger \bm{\Sigma}^{-1} $ in Section 4 corresponds $\bm{\alpha}$ in Section 3.
Therefore, we can show the relation between a set of orthogonal projection $\mathcal U$ in Section 3 and the projected matrix $\bm{M}$ in Section 4.

\newpage
\section{Modification to positive semi-definite matrices for bike sharing data}
\label{SetupBike}
If we consider the case that the sequence of adjacency matrices satisfies the positive semidefiniteness, since in this setting we cannot use any scalar-valued (positive semidefinite) kernels (i.e., we cannot define the kernel), we can modify
the sequence of adjacency matrix by adding diagonal matrix of minimum eigenvalues $\lambda_{min}$ such that $\bm{A}_t' = \bm{A}_t + \lambda_{min} \bm{I}$ to satisfy the positive semidefiniteness and to reflect the information of number of bikes.
This procedure would be mathematically reasonable, because 
$\bm{A}_t' \bm{u}_i = \bm{A}_t \bm{u}_i+\lambda_{min}\bm{u}_i = \lambda_i\bm{u}_i+\lambda_{min}\bm{u}_i = (\lambda_i+\lambda_{min})\bm{u}_i$, where $\lambda_{i}$ and $\bm{u}_i$ are $i$th eigenvalue and eigenvector, respectively.
Qualitatively, the results in Figure \ref{fig:BikeS1} were similar to Figure 2.

\begin{figure*}[h!]
\centering
\includegraphics[width=1\columnwidth]{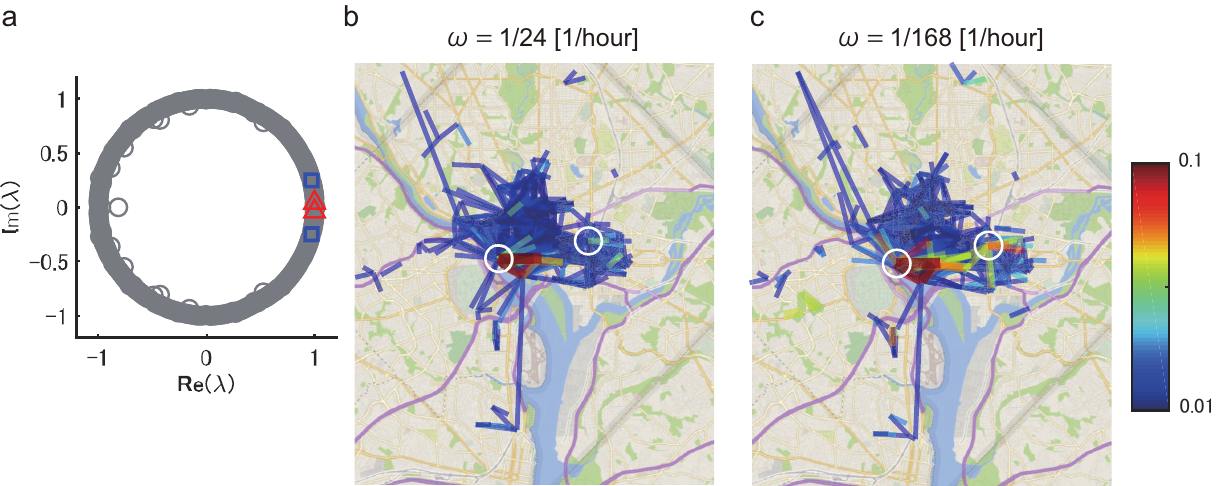}
\caption{Temporal and spatial modes of Graph DMD on the bike-sharing data of Washington D.C. 
The difference from Figure 2 is that the adjacency matrices were modified to positive semi-definite matrices.
Configurations in (a)-(c) are the same as Figure 2 (b)-(d).
Qualitatively, the results were similar to Figure 2.
}
\vspace{-4mm}
\label{fig:BikeS1}
\end{figure*}

\newpage
\section{Configuration of the fish-schooling model}
\label{ConfigFish}
As a multiagent model, individual-based models that simulate swarming, torus-like, or parallel group behaviors of  fish-schooling~\citep{Couzin02} are good examples because the relation between the properties of the local system and the emergence of global behavior are well-known and explicit. 

The schooling model we used in this study was a unit-vector based (rule-based) model, which accounts for the relative positions and direction vectors neighboring fish agents, such that each fish tends to align its own direction vector with those of its neighbors. We used an existing model based on the previous work~\citep{Couzin02}. 
In this model, 64 agents (length: 0.5 m) are described by a two-dimensional vector with a constant velocity (4 m/s) in a boundary circle (radius: 25 m) as follows: 
${\bm{r}}_i=\left({x_i}~{y_i}\right)^T$ and ${\bm{v}}_i\left(t\right)= \|\bm{v}_i\|_2\bm{d}_i$, where $x_i$ and $y_i$ are two-dimensional Cartesian coordinates, ${\bm{v}}_i$ is a velocity vector, $\|\cdot\|_2$ is Euclid norm, and $\bm{d}_i$ is an unit directional vector for agant $i$.

At each time step, a member will change direction according to the positions of all other members. The space around an individual is divided into three zones with each modifying the unit vector of the velocity.
The first region, called the repulsion zone with radius $r_r = 1$ m, corresponds to the ``personal'' space of the particle. Individuals within each other’s repulsion zones will try to avoid each other by swimming in opposite directions. 
The second region is called the orientation zone, in which members try to move in the same direction  (radius $r_o$). 
We changed the parameter $r_o$ to generate the three behavioral shapes (we set $r_o$ to $2$, $10$, and $13$: see Appendix D). 
Next is the attractive zone (radius $r_a = 15 m$), in which agents swim towards each other and tend to cluster, while any agents beyond that the radius has no influence. 
Let $\lambda_r$, $\lambda_o$, and $\lambda_a$ be the numbers in the zones of repulsion, orientation and attraction respectively. For $\lambda_r \neq 0$, the unit vector of an individual at each time step $\tau$ is given by:
\begin{equation} 
\bm{d}_i(t+\tau , \lambda_r \neq 0 )=-\left(\frac{1}{\lambda_r-1}\sum_{j\neq i}^{\lambda_r}\frac{\bm{r}_{ij}(t)}{\|\bm{r}_{ij}(t)\|_2}\right),
\end{equation} 
where ${\bm{r}}_{ij}={\bm{r}}_j-{\bm{r}}_i$.
The velocity vector points away from neighbors within this zone to prevent collisions. This zone is given the highest priority; if and only if $\lambda_r = 0$, the remaining zones are considered. 
The unit vector in this case is given by:
\begin{equation} 
\bm{d}_i(t+\tau , {\lambda}_r=0)=\frac{1}{2}\left(\frac{1}{\lambda_o}\sum_{j=1}^{\lambda_o} \bm{d}_j\left(t\right)+\frac{1}{\lambda_a-1}\sum_{j\neq i}^{{\lambda}_a}\frac{{\bm{r}}_{ij}\left(t\right)}{\|{\bm{r}}_{ij}\left(t\right)\|_2}\right).
\end{equation} 
The first term corresponds to the orientation zone while the second term corresponds to the attraction zone. The above equation contains a factor of $1/2$ which normalizes the unit vector in the case where both zones have non-zero neighbors. If no agents are found near any zone, the individual maintains constant velocity at each time step.

In addition to the above, we constrain the angle by which a member can change its unit vector at each time step to a maximum of $\beta = 30$ deg. This condition was imposed to facilitate rigid body dynamics. Because we assumed point-like members, all information about the physical dimensions of the actual fish is lost, which leaves the unit vector free to rotate at any angle. In reality, however, conservation of angular momentum will limit the ability of the fish to turn angle $\theta$ as follows:
\begin{equation} 
  \bm{d}_i\left(t+\tau \right)\cdot \bm{d}_i\left(t\right) = 
  \begin{cases}
   \cos(\beta ) & \text{if $\theta >\beta$} \\
   \cos\left(\theta \right) & \text{otherwise}.
  \end{cases}
\end{equation} 
If the above condition is not met, the angle of the desired direction at the next time step is rescaled to $\theta = \beta$. In this way, any un-physical behavior such as having a 180$^\circ$ rotation of the velocity vector in a single time step, is prevented.

\section{Simulation of fish-schooling model}
The initial conditions were set such that the particles would generate a torus motion, though all three motions emerge from the same initial conditions. The initial positions of the particles were arranged using a uniformly random number on a circle with a uniformly random radius between 6 and 16 m (the original point is the center of the circle). Boundary radius was set to 25 m. The initial velocity was set to be perpendicular to the initial position vector. The average values of the control parameter $r_o$ were set to 2, 10, and 13 to generate the swarm, torus, and parallel behavioral shapes, respectively. We simply added noise to the constant velocities among the agents (but constant within a particle) with a standard deviation of $\sigma = 0.05$. The time step in the simulation was set to $10^{-2}$ s. We simulated ten trials for each parameter $r_o$ in 10 s intervals (1000 frames). The analysis start times were varied depending on the behavior type to avoid calculating the transition period (torus: 10 s, swarm, and parallel: 30 s after the simulation start). 
\label{SimFish}

\section{Embedding and recognition of fish-schooling dynamics}
\label{EmbedFish}
To create the sequence of adjacency matrices for Graph DMD, we used Gaussian kernels.
Let $\bm{z}_{i,t}$ and $\bm{z}_{j,t}$ be two-dimensional Cartesian coordinates for particle $i$ and $j$ at time $t$.
Then, Gaussian kernel as the component of the adjacency matrix is given as 
\begin{equation} 
  \bm{A}_{i,j,t} = \exp\left(\frac{-\|\bm{z}_{i,t}-\bm{z}_{j,t}\|^2_2}{2\sigma'}\right),
\end{equation} 
where $\sigma'$ is set to $25^2/2\rm{log}2$ such that $\bm{A}_{i,j,t} = 0.5$ if $\|\bm{z}_{i,t}-\bm{z}_{j,t}\|_2$ is equivalent to the boundary radius (25 m).
Since the order of the adjacency matrix is not uniquely determined, we sorted it by the nearest-neighbors.
We simulated 64 agents in 1000 frame intervals; thus the size of the sequence of adjacency matrices are $\mathcal A \in \R^{64\times64\times1000}$.

For embedding the distance matrix with Graph DMD modes using MDS, we compute the distance matrix between the temporal frequency modes.
Note that if there are similar frequencies or interactions between temporal and spatial DMD modes, the spectral kernel~\citep{Fujii17} can be effective in the case with the matrix form.
However, we need to define the spectral kernel in tensor form for this application. 

For the comparable methods to extract frequency information, we also perform reformulated TDMD with the tensor data $\mathcal X \in \R^{64\times2\times1000}$ using two-dimensional Cartesian coordinates and basic DMD with the data matrix $\bm X \in \R^{64^2\times1000}$ breaking the tensor structure of the Euclid distance matrix series.
\end{document}